\newcommand{\ten}[1]{\mathcal{#1}}
\tikzset{
  tnnode/.style = {draw, circle, line width=1.2pt, fill=purple!40, minimum size=4.4mm, inner sep=0pt},
  tncore/.style = {draw, circle, line width=1.2pt, fill=purple!40, minimum size=4.8mm, inner sep=0pt},
  leg/.style   = {line width=1.2pt},
  link/.style  = {line width=1.2pt},
  inpt/.style  = {line width=1.2pt},
    sel/.style   = {draw, rectangle, line width=1.1pt, fill=orange!70,
                  minimum size=5mm, inner sep=0pt, font=\scriptsize},
  lbl/.style   = {font=\scriptsize}
}
\setlist[itemize]{leftmargin=*, itemsep=0.2em, topsep=0.2em}
\numberwithin{equation}{section}
\newtheorem{theorem}{Theorem}[section]
\newtheorem{lemma}{Lemma}[section]
\theoremstyle{definition}
\theoremstyle{remark}
\newcommand{\R}{\mathbb{R}}
\DeclareMathOperator{\Diag}{Diag}
\newcommand{\methodname}{TN-SHAP\xspace}
\begin{document}

\twocolumn[

\aistatstitle{Tractable Shapley Values and Interactions via Tensor Networks}

\aistatsauthor{ Farzaneh Heidari\And Chao Li \And Guillaume Rabusseau }

\aistatsaddress{ DIRO \& Mila \\ Universit\'e de Montr\'eal  \And
RIKEN AIP \And
DIRO \& Mila, CIFAR AI Chair \\ Universit\'e de Montr\'eal } ]

\begin{abstract}
We show how to replace the $O(2^n)$ coalition enumeration  over $n$ features behind Shapley values and Shapley-style interaction indices with a \emph{few-evaluation} scheme on a tensor-network (TN) surrogate: \methodname. The key idea is to represent a predictor’s local behavior as a factorized multilinear map, so that coalitional quantities become \emph{linear probes} of a coefficient tensor. \methodname replaces exhaustive coalition sweeps with just a small number of targeted evaluations to extract order$-k$ Shapley interactions. 
In particular, both order-1 (single-feature) and order-2 (pairwise) computations have cost $O\!\big(n\,\mathrm{poly}(\chi) + n^2\big)$, where $\chi$ is the TN’s maximal cut rank. 
We provide theoretical guarantees on the approximation error and tractability of \methodname. 
On UCI datasets, our method matches enumeration on the fitted surrogate while reducing evaluation by orders of magnitude and achieves \textbf{25--1000$\times$} wall-clock speedups over KernelSHAP-IQ at comparable accuracy, while amortizing training across local cohorts.
\end{abstract}

\section{INTRODUCTION} 
Explaining \emph{how} features (individually and jointly) drive predictions is crucial for decision support, scientific discovery, and mechanistic interpretability. Shapley values~\citep{shapley:book1952} and their higher-order generalizations (Shapley–Taylor, Shapley Interaction Indices; SII~\citealp{grabisch1999axiomatic})
give principled, symmetry-respecting attributions. Yet, outside special architectures, computing these indices remains difficult in practice. 

{Existing approaches either (i) leverage structural assumptions (e.g., trees~\citep{treeshap}, linear/GAM-style models~\citep{taylor, bordt2023shapley}), (ii) rely on heavy sampling with delicate variance/weighting tuning~\citep{lundberg2017unified, fumagalli2024kernelshap}, or (iii) amortize estimation with additional training overhead~\citep{fastshap}. In this work, we follow the same general principle: we show that when a predictor can be well approximated by a low-rank multilinear tensor-network surrogate in a lifted feature space, Shapley values and Shapley-style interactions can be computed exactly and efficiently in practice.}

In this paper, we introduce \methodname: a method to compute Shapley values and Shapley-style interactions \emph{exactly on a multilinear TN function or surrogate}, using only a handful of structured evaluations. {For non-multilinear predictors, approximation quality depends on surrogate fidelity.} To reduce $O(2^n)$ complexity to $O(n)$, \methodname uses three insights:

First, we exploit the algebraic structure of multilinear functions to map coalition queries directly to tensor coefficients. Since each Shapley marginal contribution $v(C \cup \{i\}) - v(C)$ (see Section~\ref{sec:bg-shapley})   corresponds to a specific subset of monomial coefficients in the multilinear expansion, we can extract these values without explicitly evaluating all $2^n$ coalitions---the tensor representation already encodes them~\citep{Owen1972, Roth1988}.

Second, we introduce diagonal selector matrices that transform the combinatorial problem of coalition enumeration into polynomial interpolation. By augmenting feature inputs with thin diagonal matrices $S_r(t) = \text{Diag}(t, 1)$, we aggregate all coalitions of the same size into powers of $t$, allowing \methodname to recover all $n$ size-aggregated marginal contributions by solving a single $n \times n$ linear system rather than querying exponentially many subsets.

Third, we leverage tensor network factorizations to make multilinear evaluation tractable at scale. By decomposing the coefficient tensor into a TN with bond dimension $\chi$, we reduce the cost of each forward pass from $O(2^n)$ to $O(\text{poly}(\chi))$, enabling the $n$ evaluations required for interpolation to be completed in $O(n \cdot \text{poly}(\chi))$ time total, achieving polynomial rather than exponential scaling in the number of features.
Our main contributions can be summarized as follows:
\begin{itemize}[leftmargin=*, itemsep=0.25em, topsep=0.2em]
\item \textbf{Unified probe--interpolation scheme:} We develop an exact algorithm for computing Shapley values and \(k\)-way interactions through structured TN evaluations {on the surrogate function}, requiring only \(O(n)\) forward passes instead of \(O(2^n)\) coalition queries (Section~\ref{sec:method}).
\item \textbf{Feature-map–enhanced TN surrogates:} We show how learned or hand-crafted feature lifts can preserve the multilinearity required for exact computation while improving surrogate fidelity (Section~\ref{subsec:feature-maps}).
\item \textbf{Empirical validation:} On UCI benchmarks and synthetic teachers, our method achieves 25--1000$\times$ wall-clock speedups over KernelSHAP-IQ at comparable accuracy while guaranteeing exactness on the surrogate (Section~\ref{sec:experiments}).
\end{itemize}
\section{BACKGROUND AND PRELIMINARIES} \label{sec:background}
We summarize coalitional values (Shapley and Shapley-style interactions), the multilinear extension used to link them to coefficients, and the tensor/tensor-network (TN) notation we will use throughout.

\subsection{Coalitional Values and the Shapley Value} \label{sec:bg-shapley}
Let $N=\{1,\dots,n\}$ index features and let $C\subseteq N$. 
For an input $x\in\mathbb{R}^n$, a \emph{coalition value} $v(x,C)$ is the expected model output when features in $C$ are clamped to $x_C$ (restriction of $x$ to coordinates in $C$) and the complement $\bar C$ is marginalized. We consider two standard choices:
\begin{align}
\textsc{Obs:}\quad 
v_{\mathrm{obs}}(x,C)
&= \mathbb{E}_{z\sim D}\!\left[f(z)\,\middle|\, z_C = x_C\right],
\label{eq:v-obs}\\
\textsc{Int:}\quad 
v_{\mathrm{int}}(x,C)
&= \mathbb{E}_{z\sim D}\!\left[f(x_C, z_{\bar C})\right].
\label{eq:v-int}
\end{align}
where expectations are over the data distribution $D$. 
The Shapley value $\phi^{\mathrm{SV}}(i;x)$ of feature $i\in N$~\citep{shapley:book1952} is 
\begin{equation*}\scriptsize 
\sum_{s=0}^{n-1}\frac{s!(n-s-1)!}{n!} \sum_{\substack{C\subseteq N\setminus{\{i\}}\\|C|=s}} \big[v(x,C\cup{\{i\}})-v(x,C)\big].
\label{eq:sv} 
\end{equation*} 
Shapley values are weighted averages of marginal contributions where weights depend \emph{only on coalition size} $|C|$, not on specific coalition membership. This size-aggregation structure is precisely what multilinear representations used by \methodname can exploit for efficient computation.

\paragraph{Tractable Shapley values with multilinearity} \label{sec:bg-mle}
The multilinear extension (MLE) is the critical bridge between exponentially-many coalitions and polynomial-time computation. A multilinear function has the form
\begin{equation} \label{eq:mle} 
\textstyle f(x)=\sum_{T\subseteq N} c_T\prod_{j\in T} x_j,
\end{equation}
where each coefficient \(c_T\) encodes the contribution of the monomial indexed by the subset \(T\).
In a multilinear representation, each coalition query \(v(x,C)\) reads a \emph{subset of coefficients} whose indices are contained in \(C\):
setting \(x_j=1\) for \(j\in C\) and \(x_j=0\) otherwise yields
\(f(x)=\sum_{T\subseteq C} c_T\). This means:
\begin{itemize}[leftmargin=*, itemsep=0.1em, topsep=0.1em]
\item Coalition values are linear combinations of tensor coefficients $c_T$.
\item Shapley marginals $v(x,C\cup\{i\}) - v(x,C)$ correspond to specific coefficient subsets.
\item The $2^n$ coalition queries reduce to structured reads from a tensor.
\end{itemize}

Collecting the coefficients $c_T$ into an order-$n$ tensor $\ten{T}$, we can view $f$ as a multilinear map, and this tensor structure is what tensor networks will make tractable.

\subsection{Tensors and Tensor Networks}
\label{sec:bg-tensor-map}
\paragraph{Tensors as multilinear maps}
Let $\tilde x_i\in\mathbb{R}^{d_i}$ be 
$[x_i,1]^\top$.
An order-$n$ tensor $\ten T\in\mathbb{R}^{d_1\times\cdots\times d_n}$ induces a map
\begin{equation}
g(\tilde x_1,\dots,\tilde x_n)
\;=\;
\ten T \times_1 \tilde x_1 \times_2 \tilde x_2 \cdots \times_n \tilde x_n,
\label{eq:multi-map}
\end{equation}
where $\times_i$ denotes the ``tensor-vector'' product along mode $i$. See \citet{kolda2009} for background on tensors and mode-wise products. In this work, we use $g$ both as (i) a \emph{surrogate} fit to $f$ locally/globally, and (ii) an exact realization when $f$ is already multilinear. 

\begin{figure}[t]
\centering

\begin{minipage}{0.48\linewidth}
\centering
\begin{tikzpicture}[x=0.9cm,y=0.8cm]  
  \node[tnnode, minimum size=6mm] (G1) at (0,0) {};
  \node[tnnode, minimum size=6mm] (G2) at (1.4,0) {};
  \node[tnnode, minimum size=6mm] (G3) at (2.8,0) {};
  \node[tnnode, minimum size=6mm] (G4) at (4.2,0) {};
  \draw[link] (G1) -- (G2) -- (G3) -- (G4);
  \foreach \G/\k in {G1/1, G2/2, G3/3, G4/4} {
    \draw[leg] (\G) -- ++(0,-0.7) node[lbl, below] {$\vec{x}_{\k}$};
  }
\end{tikzpicture}
\end{minipage}\hfill
%
\begin{minipage}{0.48\linewidth}
\centering
\begin{tikzpicture}[x=0.95cm,y=0.9cm]  
  \node[tncore, minimum size=6mm] (R) at (0,1.8) {};
  \node[tnnode, minimum size=5.5mm] (L) at (-1,0.9) {};
  \node[tnnode, minimum size=5.5mm] (M) at ( 1,0.9) {};
  \draw[link] (R) -- (L);
  \draw[link] (R) -- (M);

  \node[tnnode, minimum size=5.5mm] (A) at (-1.6,0) {};
  \node[tnnode, minimum size=5.5mm] (B) at (-0.4,0) {};
  \node[tnnode, minimum size=5.5mm] (C) at ( 0.4,0) {};
  \node[tnnode, minimum size=5.5mm] (D) at ( 1.6,0) {};
  \draw[link] (L) -- (A);
  \draw[link] (L) -- (B);
  \draw[link] (M) -- (C);
  \draw[link] (M) -- (D);

  \foreach \node/\k in {A/1, B/2, C/3, D/4} {
    \draw[leg] (\node) -- ++(0,-0.7) node[lbl, below] {$\vec{x}_{\k}$};
  }
\end{tikzpicture}
\end{minipage}

\caption{Two common tensor-network topologies used in this work: a tensor train (left) and a balanced binary tree (right). Only physical input legs $\vec{x}_k$ are shown.}
\label{fig:tn}
\end{figure}
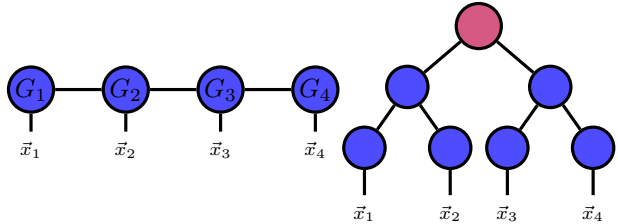

\paragraph{Tensor Networks (TN)}
\label{sec:bg-tn}
Tensor networks (TNs) represent large tensors by factorizing them into smaller tensors connected via contractions. Each node has physical legs (dangling edges, determining the tensor’s dimensions) and internal legs (edges connecting nodes), whose dimensions, called bond dimensions, control the tradeoff between expressiveness and efficiency. See Figure~\ref{fig:tn}.
A key complexity measure is the maximal cut rank $\chi$: the largest product of bond dimensions across any cut in the TN. This captures practical cost, as contraction complexity scales with the largest such cut, or equivalently, the maximal rank across bipartitions. Common TN topologies include tensor trains (TT) and balanced binary trees \citep{oseledets2011tt,cichocki2016}.

\paragraph{Contraction (forward pass).}
\emph{Contracting} a TN means summing over all internal indices until only the physical legs remain, analogous to a forward pass through the network. The runtime of such a forward pass is determined by the maximal cut rank $\chi$, and scales polynomially in $\chi$ for commonly used TN structures.  We refer to each evaluation of $g(x)$ via its TN representation as a single forward contraction.

\paragraph{TN Surrogates}
\label{sec:bg-surrogate}
We write $g$ for a TN-based surrogate of $f$ and $\ten{T}$ for the corresponding tensor. We consider both \emph{fitted surrogate}, where $\ten T$ is learned to approximate $f$ on a local/global region,
and \emph{truncated surrogate}, where $\ten T$ is a low rank TN approximation of an exact TN for $f$.

\section{METHOD}
\label{sec:method}
We first develop the geometric intuition behind our approach~(Sec.~\ref{subsec: geomeetric}), showing how diagonal evaluation reduces exponential queries to polynomial interpolation. We then formalize this via diagonal selectors and prove they yield the exact probe function needed for Shapley computation~(Sec.~\ref{subsec:probe}). Finally, we show how tensor networks make this tractable and how feature maps enhance surrogate fidelity~(Sec.~\ref{subsec:shapley-tn}).
\subsection{Multilinearity \& Efficient Computation}
\label{subsec: geomeetric}
To understand why these ingredients yield an $O(n)$ algorithm, consider the geometry of coalition evaluation. Computing all $2^n$ coalitions corresponds to querying the function at every vertex of the $[0,1]^n$ hypercube—an exponential task. 

Multilinear functions have special structure: they are uniquely determined by their values on any $(n+1)$ points along a diagonal line through the hypercube. Specifically, if we evaluate $f$ at points $(t, t, \ldots, t)$ for $t \in \{0, \frac{1}{n}, \frac{2}{n}, \ldots, 1\}$, the resulting polynomial $p(t) = f(t , \cdots, t)$ encodes all size-aggregated coalition values:

\begin{equation}
p(t) \;=\; f(t,\ldots,t)
= \sum_{T \subseteq N} c_T \, t^{|T|}
= \sum_{s=0}^{n} t^s \!\!\sum_{\substack{T \subseteq N \\ |T|=s}} c_T .
\end{equation}

where the coefficient of $t^s$ aggregates all coalitions of size $s$. Since Shapley values only depend on these size-aggregated marginals (see Section~\ref{sec:bg-shapley}), we can recover them by polynomial interpolation, reducing $2^n$ queries to just $n+1$.
For simplicity we present our method when all feature are mapped to 2 dimensional vectors: $\tilde x_i = [\phi(x_i), 1]$. The general case is treated in Appendix~\ref{app: feature_maps}.
\subsection{Computing Shapley Values via Diagonal Selectors}
\label{subsec:probe}
We now show how diagonal selectors enable polynomial interpolation for Shapley values.

To compute Shapley values for feature $i$, we need all marginal contributions $v(x, C \cup \{i\}) - v(x, C)$ for $C \subseteq N \setminus \{i\}$. By introducing a parameter $t$ that scales features uniformly, we can aggregate all coalitions of the same size into polynomial coefficients.

\paragraph{Diagonal Selectors}
To implement this diagonal evaluation strategy while maintaining the TN structure, we introduce \emph{selector matrices}:
\begin{equation}
S_r(t) = \text{Diag}(t, 1) = \begin{bmatrix} t & 0 \\ 0 & 1 \end{bmatrix} \in \mathbb{R}^{2 \times 2}.
\label{eq:selector-def}
\end{equation}
When applied to a lifted feature, the selector scales the data-dependent coordinate while preserving the bias coordinate:
\begin{equation}
S_r(t) \tilde{x}_r = S_r(t) \begin{bmatrix} x_r \\ 1 \end{bmatrix} = \begin{bmatrix} t \cdot x_r \\ 1 \end{bmatrix}.
\label{eq:selector-action}
\end{equation}

This elegantly implements the diagonal probe: coalitions of size $s$ naturally accumulate with weight $t^s$.
The selector $S_r(t)$ implements a soft coalition membership indicator:
\begin{itemize}[leftmargin=*, itemsep=0.1em]
\item When $t=1$: feature $r$ is fully present ($S_r(1)\tilde{x}_r = [x_r, 1]^\top$)
\item When $t=0$: feature $r$ is replaced by its constant channel ($S_r(0)\tilde{x}_r = [0, 1]^\top$)
\item For intermediate $t$: the function interpolates between these extremes
\end{itemize}
This will allow us to extract marginal contributions through polynomial interpolation.

\begin{figure}[t]
\centering

\begin{minipage}{0.48\linewidth}
\centering
\begin{tikzpicture}[x=0.9cm,y=0.8cm]
  \node[tnnode, minimum size=5.5mm] (G1) at (0,0) {};
  \node[tnnode, minimum size=5.5mm] (G2) at (1.3,0) {};
  \node[tnnode, minimum size=5.5mm] (G3) at (2.6,0) {};
  \node[tnnode, minimum size=5.5mm] (G4) at (3.9,0) {};

  \draw[link] (G1) -- (G2) -- (G3) -- (G4);

  \foreach \G/\k in {G1/1, G2/2, G3/3, G4/4} {
    \node[sel, minimum width=7.5mm, minimum height=4.5mm] (S\k) at ($( \G )+(0,-0.75)$) {$S_{\k}(t)$};
    \draw[link] (\G) -- (S\k.north);
    \draw[inpt] (S\k.south) -- ++(0,-0.6) coordinate (xin\k);
    \node[lbl, below=1pt of xin\k] {$\vec{x}_{\k}$};
  }
\end{tikzpicture}
\end{minipage}\hfill
%
\begin{minipage}{0.48\linewidth}
\centering
\begin{tikzpicture}[x=0.95cm,y=0.85cm]
  \node[tncore, minimum size=5.5mm] (R) at (0,1.7) {};
  \node[tnnode, minimum size=5mm] (L) at (-0.9,0.9) {};
  \node[tnnode, minimum size=5mm] (M) at ( 0.9,0.9) {};
  \draw[link] (R) -- (L);
  \draw[link] (R) -- (M);

  \node[tnnode, minimum size=5mm] (A) at (-1.5,0) {};
  \node[tnnode, minimum size=5mm] (B) at (-0.3,0) {};
  \node[tnnode, minimum size=5mm] (C) at ( 0.3,0) {};
  \node[tnnode, minimum size=5mm] (D) at ( 1.5,0) {};
  \draw[link] (L) -- (A);
  \draw[link] (L) -- (B);
  \draw[link] (M) -- (C);
  \draw[link] (M) -- (D);

  \node[sel, minimum width=7.5mm, minimum height=4.5mm] (S1) at ($(A)+(0,-0.75)$) {$S_{1}(t)$};
  \draw[link] (A) -- (S1.north);
  \draw[inpt] (S1.south) -- ++(0,-0.6) node[lbl, below] {$\vec{x}_{1}$};

  \node[sel, minimum width=7.0mm, minimum height=4.5mm] (S2) at ($(B)+(-0.10,-0.75)$) {$S_{2}(t)$};
  \draw[link] (B) -- ++(0,-0.55) (S2.north);
  \draw[inpt] (S2.south) -- ++(0,-0.6) node[lbl, below] {$\vec{x}_{2}$};

  \node[sel, minimum width=7.1mm, minimum height=4.5mm] (S3) at ($(C)+(0.10,-0.75)$) {$S_{3}(t)$};
  \draw[link] (C) -- ++(0,-0.5) (S3.north);
  \draw[inpt] (S3.south) -- ++(0,-0.6) node[lbl, below] {$\vec{x}_{3}$};

  \node[sel, minimum width=7.5mm, minimum height=4.5mm] (S4) at ($(D)+(0,-0.75)$) {$S_{4}(t)$};
  \draw[link] (D) -- (S4.north);
  \draw[inpt] (S4.south) -- ++(0,-0.6) node[lbl, below] {$\vec{x}_{4}$};
\end{tikzpicture}
\label{fig: tn-selectors}
\end{minipage}

\caption{Two TNs with selector matrices $S_k(t)$: tensor train (left) and balanced binary tree (right).}
\label{fig: tn-selectors}
\end{figure}
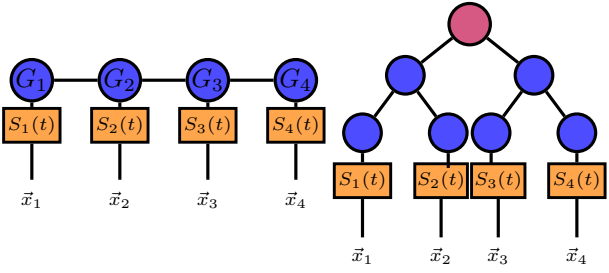

To compute the Shapley value for feature $i$, we construct a probe function $G_i(t;x)$ that aggregates all marginal contributions $v(x, C \cup \{i\}) - v(x, C)$ for $C \subseteq N \setminus \{i\}$.
For a fixed instance $x$ and target feature $i$, we define:
\begin{equation}
\label{eq:Gi-main}
\begin{aligned}
G_i(t;x) &:= g\big(M(t)S_1(t)\tilde{x}_1,\dots, M(t)S_{i-1}(t)\tilde{x}_{i-1}, S_i(1)\tilde{x}_i, \\
&\qquad M(t)S_{i+1}(t)\tilde{x}_{i+1},\dots, M(t)S_n(t)\tilde{x}_n\big) \\
&\quad - g\big(M(t)S_1(t)\tilde{x}_1,\dots, M(t)S_{i-1}(t)\tilde{x}_{i-1}, S_i(0)\tilde{x}_i, \\
&\qquad M(t)S_{i+1}(t)\tilde{x}_{i+1},\dots, M(t)S_n(t)\tilde{x}_n\big),
\end{aligned}
\end{equation}
where
$
M(t)=
\begin{bmatrix}
1 & 0\\
0 & t+1
\end{bmatrix}.
$
This applies selectors $M(t)S_r(t)$ to all features $r \neq i$ (the "complement features"), while evaluating feature $i$ in two configurations:
\begin{itemize}[leftmargin=*, itemsep=0.1em]
\item {Included}: $S_i(1)\tilde{x}_i = [x_i, 1]^\top$ (feature present)
\item {Off (bias)}: $S_i(0)\tilde{x}_i = [0, 1]^\top$ (feature absent)
\end{itemize}
The difference isolates the marginal contribution of feature $i$ across all coalitions of complement features.

Due to multilinearity, each coalition $C \subseteq N \setminus \{i\}$ with $|C| = s$ contributes a term scaled by $t^s$ (one factor of $t$ per included feature). Therefore:
\begin{equation}
\label{eq:Gi-poly}
G_i(t;x) = \sum_{s=0}^{n-1} m_s^{(i)}(x) \cdot t^s,
\end{equation}

where the coefficient $m_s^{(i)}(x) = \sum_{\substack{C \subseteq N\setminus\{i\}\\ |C|=s}} \big[v(x, C \cup \{i\}) - v(x, C)\big]$ aggregates all size-$s$ marginal contributions. {See Appendix~\ref{app:Gi-proof-diag-t1-simple} for the derivation from equations~\eqref{eq:Gi-main} to~\eqref{eq:Gi-poly}.}
{\paragraph{Relation to the Möbius transform.}
The induced set function $v(x,\cdot)$ can equivalently be represented by its Möbius coefficients $\mu(S)$~\citet{grabisch2000equivalent}, defined by $
\mu(S)=\sum_{T\subseteq S}(-1)^{|S|-|T|}\,v(x,T),
\qquad S\subseteq N.
$
In this representation, Shapley admits the classical interaction-weighted form
\begin{equation}
\label{eq: mobius}
\Phi_i^{\mathrm{SV}}(x)=\sum_{S\subseteq N:\, i\in S}\frac{1}{|S|}\,\mu(S),
\end{equation}

showing that $\Phi_i^{\mathrm{SV}}(x)$ averages all interaction terms involving feature $i$
with weight $1/|S|$. See Appendix~\ref{Mobius_connection} for details.}
\paragraph{Polynomial interpolation}
The coefficients $\{m_s^{(i)}(x)\}_{s=0}^{n-1}$ can simply and efficiently be recovered from $n$ evaluations using polynomial interpolation:
\texttt{1)}. Evaluate $G_i(t;x)$ at $n$ distinct points: $h_\ell = G_i(t_\ell; x)$ for $\ell = 0, \ldots, n-1$;
\texttt{2)}. Solve the structured linear system (also known as Vandermonde system\footnote{A linear system with a Vandermonde matrix, standard in polynomial interpolation~\citep{davis1963interpolation}.}) $Vm^{(i)} = h$, where $V_{\ell+1, r+1} = t_\ell^r$ for each $0\leq \ell,r < n$;
\texttt{3)}. Compute the Shapley values as: $\Phi_i^{\text{SV}}(x) = \sum_{s=0}^{n-1} \frac{s!(n-s-1)!}{n!} m_s^{(i)}(x)$.
The entire process, summarized in Algorithm~\ref{alg:feweval-shap}, requires only $n$ evaluations of the surrogate, exponentially fewer than the $2^n$ coalition queries needed by enumeration.
\begin{algorithm}[t]
\caption{TN-SHAP FewEval (single-feature Shapley $\Phi^{\mathrm{SV}}(i;x)$)}
\label{alg:feweval-shap}
\small
\begin{algorithmic}[1]
\Require Instance $x$, feature index $i$, probe nodes $\{t_\ell\}_{\ell=0}^{n-1}$ (all distinct)
\State Lift inputs once: $\tilde x_r \gets [\phi_r(x_r),1]$ for all $r\in N$
\For{$\ell=0$ to $n-1$} 
\Comment{Two TN forwards per node (on/off difference)}
  \State For all $r\neq i$: $\widetilde x_r^{(\ell)} \gets S_r(t_\ell)\,\tilde x_r$ with $S_r(t)=\Diag(t,1)$
  \State $\widetilde x_i^{\mathrm{on}} \gets S_i(1)\,\tilde x_i$, \quad $\widetilde x_i^{\mathrm{off}} \gets S_i(0)\,\tilde x_i$
  \State $g_1^{(\ell)} \gets g\!\big(\widetilde x_1^{(\ell)},\dots,\widetilde x_{i-1}^{(\ell)},\,\widetilde x_i^{\mathrm{on}},\,\widetilde x_{i+1}^{(\ell)},\dots,\widetilde x_n^{(\ell)}\big)$
  \State $g_0^{(\ell)} \gets g\!\big(\widetilde x_1^{(\ell)},\dots,\widetilde x_{i-1}^{(\ell)},\,\widetilde x_i^{\mathrm{off}},\,\widetilde x_{i+1}^{(\ell)},\dots,\widetilde x_n^{(\ell)}\big)$
  \State $h_\ell \gets g_1^{(\ell)} - g_0^{(\ell)}$ \Comment{$h_\ell = G_i(t_\ell; x)$}
\EndFor
\State Build Vandermonde $V\in\mathbb{R}^{n\times n}$ with $V_{\ell+1, r+1}=t_\ell^{\,r}$ for $\ell,r=0,\dots,n-1$
\State Solve $V\,m^{(i)}(x)=h$ stably (e.g., QR); denote $m^{(i)}(x)=(m^{(i)}_0,\dots,m^{(i)}_{n-1})^\top$
\State Set Shapley weights $\alpha_s=\dfrac{s!(n-s-1)!}{n!}$ for $s=0,\dots,n-1$
\State \Return $\displaystyle \Phi^{\mathrm{SV}}(i;x) \gets \sum_{s=0}^{n-1}\alpha_s\, m^{(i)}_s(x)$
\end{algorithmic}
\end{algorithm}
\subsection{Efficient Implementation with Tensor Networks}
\label{subsec:shapley-tn}

The selector-probe method developed above applies directly to tensor network surrogates, where the exponential speedup becomes practically realizable through efficient TN contractions.

\paragraph{From tensor map to TN probe.}
Let $g$ be the multilinear TN map in Eq.~\eqref{eq:multi-map} realized by a coefficient tensor $\mathcal{T}\in\mathbb{R}^{2\times\cdots\times 2}$ (binary lifts), so that $g(\tilde{x}_1,\dots,\tilde{x}_n)=\mathcal{T}\times_1\tilde{x}_1\cdots\times_n\tilde{x}_n$. 
The selector action translates naturally to TN operations: applying $S_r(t)=\text{Diag}(t,1)$ to feature $r$ means contracting the $r$-th physical leg of $\mathcal{T}$ with $S_r(t)\tilde{x}_r$ instead of $\tilde{x}_r$. This process is illustrated in Figure~\ref{fig: tn-selectors}.
\paragraph{Feature maps for enhanced surrogates}
\label{subsec:feature-maps}

While the diagonal selector method works with binary features $[x_i, 1]^\top$, real-world functions often exhibit nonlinear behavior that binary encodings cannot capture. Feature maps let us build more expressive surrogates while preserving the multilinear structure \methodname requires.

We lift each scalar feature $x_i$ to a vector $\tilde{x}_i \in \mathbb{R}^{d_i}$ via a feature map:
\begin{equation}
\phi_i: \mathbb{R} \to \mathbb{R}^{d_i-1}, \qquad \tilde{x}_i = [\phi_i(x_i), 1]^\top
\end{equation}
It is important to notice that the surrogate remains multilinear in the lifted features $(\tilde{x}_1, \ldots, \tilde{x}_n)$. This means each $\phi_i$ can be nonlinear in $x_i$, but the tensor network operates multilinearly on the resulting channels. { Importantly, the multilinearity is not necessarily in the input space: it is assumed only after applying the per-feature lift $\phi_i$, so the surrogate can still represent highly nonlinear functions of the original inputs.}

The bias coordinate provides a constant channel, allowing selectors to implement inclusion by passing $\phi(x_i)$ and exclusion by substituting the fixed value $1$, without changing the TN topology.
Many feature maps can be used, including: 
\begin{itemize}[leftmargin=*, itemsep=0.1em]
\item \text{Binary (default):} $d_i=2$, $\phi_i(x_i)=x_i$ , which is simple and efficient but has limited expressiveness
\item \text{Polynomial:} $d_i=k+1$, $\phi_i(x_i)=[x_i, x_i^2, \ldots, x_i^k]^\top$, which can capture polynomial behavior
\item \text{Fourier:} $d_i=2k + 1$,  with components $\sin(j\omega x_i)$, $\cos(j\omega x_i)$, which can  model periodic patterns
\item \text{Learned (MLP):} $d_i=r$, $\phi_i(x_i)=[\psi_i^{\text{MLP}}(x_i)]^\top$, which is expressive and can adapt to data
\end{itemize}

Richer maps improve fidelity but can increase the TN size; we observe that even modest lifts (e.g., $d_i=3$) are often effective.
\paragraph{Computational complexity.}
While the Shapley value computation through polynomial interpolation remains unchanged~($G_i(t;x)$ in  \eqref{eq:Gi-main} yields the polynomial in Eq.~\eqref{eq:Gi-poly} leading to a Vandermonde system), the TN structure dramatically reduces computational cost.
Each evaluation $G_i(t_\ell;x)$ requires only two TN forward passes (on/off toggle), hence the total cost to compute the Shapley value for feature $i$ corresponds to $2n$ TN evaluations and solving a structured $n\times n$ (Vandermonde) linear system. The overall complexity is $O(n \cdot \text{poly}(\chi) + n^2)$, where $\chi$ is the bond dimension.
This achieves the promised exponential speedup: from $O(2^n)$ coalition queries to $O(n \cdot \text{poly}(\chi))$ operations.

\section{THEORETICAL ANALYSIS}
\label{subsec:theory-main}

Our method provides three key guarantees: exactness on the surrogate, controlled error relative to the original model, and polynomial-time complexity. We state the main results here, with full proofs in Appendix~\ref{app:proofs}.

\begin{theorem}[Approximation Error for Coalitional Indices]
\label{thm:approximation}
Let $f$ be the original model and $g$ a multilinear surrogate. If the surrogate approximates coalition values uniformly well:
\[
\sup_{C\subseteq N}\!\bigl|v_g(x,C)-v_f(x,C)\bigr|\le \varepsilon,
\]
then any size-weighted coalitional index $\Phi_i = \sum_{C\subseteq N\setminus\{i\}} w(|C|,n)\,[v(x, C\cup\{i\})-v(x, C)]$ satisfies:
\begin{equation}
\bigl|\Phi_i^g-\Phi_i^f\bigr| \le 2\varepsilon \sum_{s=0}^{n-1} \bigl|w(s,n)\bigr| \binom{n-1}{s}.
\end{equation}
\end{theorem}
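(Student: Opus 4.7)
The plan is a straightforward triangle-inequality argument on the telescoping structure of the coalitional index. First I would write the difference
\[
\Phi_i^g - \Phi_i^f = \sum_{C\subseteq N\setminus\{i\}} w(|C|,n)\,\Bigl[\bigl(v_g(x,C\cup\{i\})-v_f(x,C\cup\{i\})\bigr) - \bigl(v_g(x,C)-v_f(x,C)\bigr)\Bigr],
\]
grouping the marginal contributions of $g$ and $f$ so that only differences in coalition values appear inside the brackets. This exploits linearity of the index in the underlying value function and is the key algebraic rearrangement.

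Next I would apply the triangle inequality termwise. Each bracketed quantity is bounded by
\[
\bigl|v_g(x,C\cup\{i\})-v_f(x,C\cup\{i\})\bigr| + \bigl|v_g(x,C)-v_f(x,C)\bigr| \le 2\varepsilon,
\]
using the uniform approximation hypothesis twice (once for the coalition $C\cup\{i\}$ and once for $C$). Pulling $|w(|C|,n)|$ outside the absolute value gives
\[
\bigl|\Phi_i^g-\Phi_i^f\bigr| \le 2\varepsilon \sum_{C\subseteq N\setminus\{i\}} \bigl|w(|C|,n)\bigr|.
\]

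Finally I would reorganize the sum over $C\subseteq N\setminus\{i\}$ by coalition size $s=|C|$. Since there are exactly $\binom{n-1}{s}$ subsets of $N\setminus\{i\}$ of size $s$ and the weight depends only on $s$, the right-hand side becomes
\[
2\varepsilon \sum_{s=0}^{n-1} \bigl|w(s,n)\bigr|\binom{n-1}{s},
\]
which matches the claimed bound.

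I do not expect a genuine obstacle here; the statement is a worst-case propagation of a uniform surrogate error through a linear functional of marginal contributions. The only subtlety worth flagging in the write-up is that the factor of $2$ is tight in general (attained when the two error terms have opposite signs), so the bound cannot be improved without additional assumptions such as sign or correlation structure on the coalition-value errors; this remark is a natural place to note that for Shapley values specifically the weights $w(s,n)=s!(n-s-1)!/n!$ give $\sum_s |w(s,n)|\binom{n-1}{s}=1$, recovering the clean corollary $|\Phi_i^g-\Phi_i^f|\le 2\varepsilon$.
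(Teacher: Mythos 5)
Your proposal is correct and follows exactly the paper's argument: the same rearrangement of $\Phi_i^g-\Phi_i^f$ into differences of coalition values, the same termwise triangle-inequality bound of $2\varepsilon$ per marginal contribution, and the same regrouping by coalition size using the $\binom{n-1}{s}$ count. Your closing remark about the Shapley weights summing to $1$ matches the paper's own interpretation of the bound.
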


\noindent\textit{Interpretation:} The error in Shapley values (or other coalitional indices) is controlled by the surrogate's worst-case coalition approximation error $\varepsilon$. For Shapley values specifically, this simplifies to $|\phi_i^g - \phi_i^f| \le 2\varepsilon$ since the weights sum to 1.

\noindent\textit{Proof sketch:} Each marginal contribution has error at most $2\varepsilon$, and there are $\binom{n-1}{s}$ coalitions of size $s$. The bound follows by triangle inequality. See Appendix~\ref{app: approx_theorem} for details.

\begin{theorem}[Tractability: From Exponential to Polynomial]
\label{thm:tractability}
Let $\mathcal{T}\in\mathbb{R}^{2\times\cdots\times 2}$ be realized by a TN with maximum bond dimension $\chi$. Any size-weighted coalitional index $\Phi_i$ is computable in $O(n\cdot\text{poly}(\chi) + n^2)$ time using diagonal selector probes and polynomial interpolation.
\end{theorem}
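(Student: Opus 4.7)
The plan is to decompose the total cost into three pieces matching the three ingredients of the algorithm in Section~\ref{subsec:probe}: (i) the \(2n\) TN contractions needed to tabulate the probe \(G_i(t_\ell;x)\) at \(n\) distinct nodes, (ii) the Vandermonde solve that inverts the interpolation, and (iii) the final weighted sum over size-aggregated marginals. The target complexity \(O(n\cdot\mathrm{poly}(\chi)+n^2)\) arises cleanly once each of these is bounded separately.

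First I would invoke Eq.~\eqref{eq:Gi-poly} to note that \(G_i(\cdot;x)\) is a univariate polynomial of degree at most \(n-1\), so interpolation at \(n\) distinct probe nodes \(\{t_\ell\}\) uniquely determines the vector \(m^{(i)}(x)=(m_0^{(i)},\dots,m_{n-1}^{(i)})^\top\) via the Vandermonde system \(Vm^{(i)}(x)=h\); the nonsingularity of \(V\) at distinct nodes is a standard fact I would simply cite. The size-weighted index is then \(\Phi_i^g=\sum_{s=0}^{n-1}w(s,n)\,m_s^{(i)}(x)\), which is a single inner product of length \(n\) and contributes only \(O(n)\) flops after the solve.

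Next I would bound the per-probe cost. By construction, each probe value \(h_\ell=g_1^{(\ell)}-g_0^{(\ell)}\) is the difference of two TN contractions in which every physical leg has been pre-multiplied by a fixed \(2\times 2\) selector acting on a lifted vector. Pre-multiplying a leg does not change the TN topology, so the contraction cost is unchanged, and contracting a TN with maximum cut rank \(\chi\) along a fixed elimination order compatible with its tree/graph structure costs \(\mathrm{poly}(\chi)\) (with the exact polynomial depending on the topology, e.g. tensor train or binary tree, as recalled in Section~\ref{sec:bg-tn}). Summing over \(\ell=0,\dots,n-1\) and over the two on/off evaluations gives \(2n\cdot\mathrm{poly}(\chi)=O(n\cdot\mathrm{poly}(\chi))\) for the entire probe table.

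Finally I would handle the Vandermonde solve. A naive LU solve costs \(O(n^3)\), but a structured Vandermonde system admits a stable \(O(n^2)\) algorithm (e.g., the Björck--Pereyra scheme~\citep{davis1963interpolation}), so the interpolation step costs \(O(n^2)\) flops. Adding this to the probe cost and the \(O(n)\) inner product yields the claimed total of \(O(n\cdot\mathrm{poly}(\chi)+n^2)\). The main subtle point, and the one I would expect to write out most carefully, is the justification that the per-evaluation TN cost remains \(\mathrm{poly}(\chi)\) uniformly in the topology: this requires fixing an elimination order once, reusing it across all \(2n\) probe evaluations, and observing that selector pre-multiplication on a physical leg is an \(O(d_i^2)\) operation that is strictly dominated by the internal contractions of cost \(\mathrm{poly}(\chi)\). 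Everything else is bookkeeping.
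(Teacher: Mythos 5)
Your proposal is correct and follows essentially the same route as the paper's proof: the same three-way cost decomposition into $2n$ selector-modified TN contractions at $O(\mathrm{poly}(\chi))$ each, an $O(n^2)$ structured Vandermonde solve (the paper cites \citet{bjorck1970vandermonde} for this step), and an $O(n)$ weighted combination of the size-aggregated marginals $m_s^{(i)}$. Your added remarks on the nonsingularity of $V$ at distinct nodes and on selector pre-multiplication preserving the contraction cost are slightly more explicit than the paper's treatment but do not constitute a different argument.
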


\noindent\textit{Proof sketch:} The diagonal probe $G_i(t)$ from Eq.~\eqref{eq:Gi-main} yields polynomial $\sum_s t^s m_s^{(i)}$. Evaluating at $n$ points gives a Vandermonde system; solving recovers all size-aggregated marginals. Each TN forward costs $O(\text{poly}(\chi))$. See Appendix~\ref{app: tractability}.

\begin{table}[t]
\centering
\caption{TN-SHAP complexity by order}
\label{tab:tnshap-complexity}
\begin{tabular}{@{}lccc@{}}
\toprule
\textbf{Method} & \textbf{TN fwd} & \textbf{Solve} & \textbf{Error}\\
\midrule
SV ($k=1$) & $2n$ & $n^2$ & $2\varepsilon$ \\
Pair ($k=2$) & $4(n-1)$ & $(n-1)^2$ & $4\varepsilon$ \\
$k$-SII & $2^k(n-k+1)$ & $(n-k+1)^2$ & $2^k\varepsilon$ \\
\bottomrule
\end{tabular}
\end{table}
\paragraph{Specialization to Shapley values and interactions.}
Table~\ref{tab:tnshap-complexity} summarizes the complexity for common attribution tasks:

\begin{itemize}[leftmargin=*, itemsep=0.2em]
\item \textbf{Shapley values:} The weights $w(s,n) = \frac{s!(n-s-1)!}{n!}$ sum to 1, yielding the tight error bound $|\phi_i^g - \phi_i^f| \le 2\varepsilon$. Computing all $n$ Shapley values requires $2n^2$ TN forwards total.

\item \textbf{$k$-way interactions:} Higher-order Shapley interaction indices use inclusion-exclusion over $2^k$ subsets, requiring $2^k(n-k+1)$ TN forwards. The error grows as $2^k\varepsilon$ due to the signed sum. See Appendix~\ref{app:ksii} for details.
\end{itemize}

\paragraph{Practical runtime.}
In practice, the $O(n\cdot\text{poly}(\chi))$ complexity yields millisecond runtimes. Using Chebyshev-Gauss nodes\footnote{$t_\ell = \frac{1}{2}(1 + \cos\frac{(2\ell+1)\pi}{2m})$, optimal for polynomial interpolation stability.} and QR factorization ensures numerical robustness, while the tensor network structure keeps computations tractable through efficient contractions.

\section{EXPERIMENTS}
\label{sec:experiments}
We evaluate TN-SHAP across three settings: (1) \emph{Synthetic validation} on exactly multilinear functions (runtime-focused, exact-recovery regime); (2) \emph{Practical accuracy on real-world models} (UCI~\citep{uci_concrete, uci_diabetes, uci_energy} MLP~\citep{goodfellow2016deep} teachers) comparing TN surrogates to sampling baselines; and (3) \emph{Rank ablations \& training dynamics} on synthetic teachers to quantify capacity requirements and convergence behavior. {Throughout this section, we report three quantities: (i) teacher/model queries used to fit the surrogate, (ii) surrogate fitting time, and (iii) attribution time once the surrogate is fitted.}
\subsection{Validation on Synthetic Multilinear Functions}
We first validate on synthetic multilinear functions where ground truth is exactly computable and the multilinearity assumption holds by construction.
\paragraph{Synthetic setup.}
We generate synthetic functions from low-rank CP multilinear models
with inputs scaled to $[-1,1]$. For each dimension $d\in\{10,20,30,40,50\}$,
We draw 10 test points and, for each, fit a binary tensor-tree surrogate (rank $\chi=16$) using only the structured diagonal probes $G_i(t_\ell;x)$ at Chebyshev–Gauss nodes. 
 
Since the ground-truth functions are multilinear by
construction, once the surrogate is fitted Shapley interactions can in
principle be recovered exactly; we therefore focus here on runtime
comparisons. Table~\ref{tab:synthetic_validation} shows that TN-SHAP achieves
orders-of-magnitude faster attribution than KernelSHAP-IQ while maintaining
theoretical exactness in this setting. For $d=40$ and $d=50$, training
requires careful initialization but successful fits yield millisecond-scale
attributions.
\begin{table}[t]
\centering
\footnotesize
\setlength{\tabcolsep}{3pt}
\renewcommand{\arraystretch}{0.92}
\caption{Runtime on synthetic multilinear functions (per instance).}
\label{tab:synthetic_validation}
\begin{tabular}{@{}rcc@{}}
\toprule
Dimension & TN-SHAP (ms) & KernelSHAP-IQ (ms) \\
\midrule
10 & 3.5  & 18.7  \\
20 & 6.3  & 115.9 \\
30 & 11.1 & 194.9 \\
40 & 14.2 & 319.4 \\
50 & 19.7 & 447.3 \\
\bottomrule
\end{tabular}
\vspace{-0.35em}
\end{table}

\begin{figure}[t]
\centering

  \includegraphics[height=2.7cm]{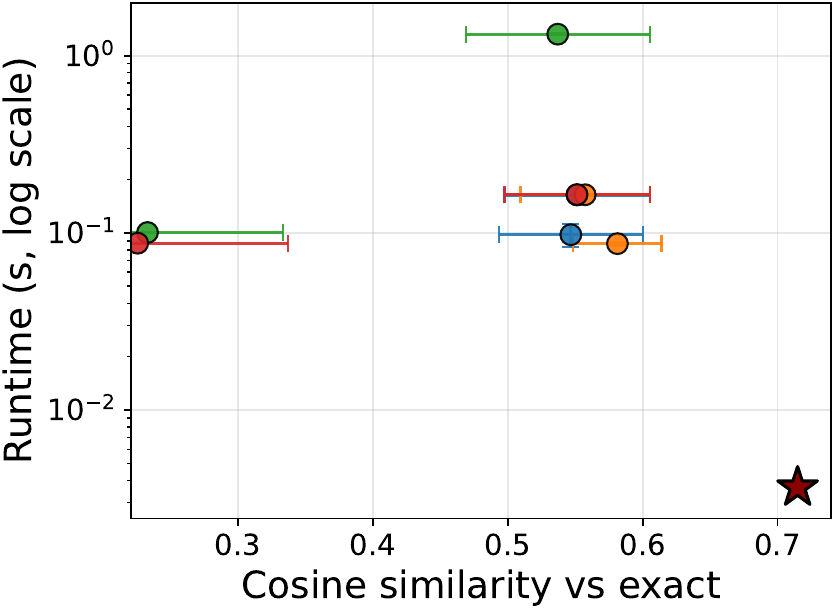}
  \includegraphics[height=2.7cm,trim={2cm 0 0 0},clip]{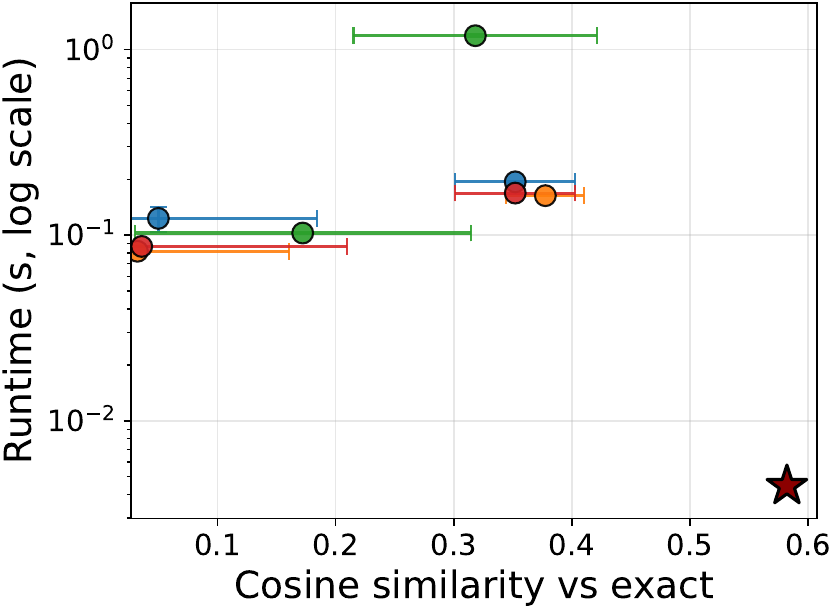}\hspace{-0.8cm}
  \raisebox{1.5cm}{\includegraphics[height=1cm]{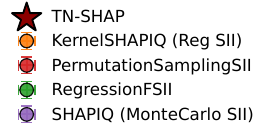}}
    \caption{Concrete: runtime (log $y$) vs.\ cosine ($x$); $k{=}2$ (left), $k{=}3$ (right). {Cosine Similarity of RegressionFSII is compared with the Faith-Shap target here.}  TN-SHAP achieves higher similarity at millisecond scale. }
  \label{fig:scatter_concrete}
\end{figure}

\subsection{Practical Accuracy on Real-World Models}
Real-world teachers (MLPs on UCI regression tasks) are only approximately multilinear, so exact recovery is not guaranteed. We therefore assess how well TN surrogates recover Shapley values and higher-order interactions in practice, comparing against sampling-based baselines under matched query budgets.

For each center test instance $\mathbf{x}_0$, we select a cohort of $100$ nearby points via $k$-NN in standardized feature space and fit a \emph{single} local binary tensor-tree surrogate (rank $\chi{=}16$) jointly on this cohort. Before the TN, each coordinate is lifted by a scalar feature map $\phi:\mathbb{R}\!\to\!\mathbb{R}$ implemented per feature with a trainable MLP with one hidden layer of 64 neurons and ReLU activation. A constant $1$ is appended as a bias term. We evaluate UCI regression tasks (Concrete, Diabetes, Energy) with 3-layer MLP teachers. The local training data around $\mathbf{x}_0$ combine: (i) a modest Gaussian neighbourhood of the cohort, $M\!\approx\!50$–$100$ samples with $\sigma=0.1\times\mathrm{std}(X_{\text{train}})$; and (ii) \textbf{$2n^2$ selector-weighted interpolation configurations}, i.e., evaluations of $G_i(t_\ell)$ for each $i\in[n]{:=}\{1,\dots,n\}$ at Chebyshev–Gauss nodes $t_\ell$. For $n{=}8$–$10$ this adds $\approx160$–$200$ structured evaluations that stabilize the polynomial interpolation. The total teacher-call budget is $M+2n^2\approx250$ per center, well below exhaustive enumeration ($2^n\approx1024$). Once fitted, we reuse the same surrogate to compute Shapley values and higher-order interactions for all $100$ points via cheap tensor contractions. We compare against KernelSHAP-IQ~\citep{fumagalli2024kernelshap} under budgets of 100–2000 queries and report accuracy against ground-truth SII obtained by exhaustive enumeration; full details appear in Appendix~\ref{app: experiment}.

\begin{table}[t]
\centering
\scriptsize
\setlength{\tabcolsep}{3pt}
\renewcommand{\arraystretch}{0.92}
\caption{\texttt{Diabetes}: mean$\pm$std cosine similarity and MSE($\times 10^{-4}$) w.r.t. exact teacher; runtime in ms. 
Best times and cosine similarity are bold; values in parentheses include amortized training cost (+65.4ms). 
SHAPIQ refers to the Regression SII variant (KernelSHAPIQ).}
\begin{tabular}{lcccc}
\toprule
Method & Budget & Time [ms] & Cos $\uparrow$ & MSE $\downarrow$ \\
\midrule
\multicolumn{5}{c}{\textbf{Order $k=1$}} \\
\midrule
\textbf{TN-SHAP} & --    & \textbf{2.8$\pm$1.2} (68.2) & \textbf{0.994$\pm$0.006} & {4.8$\pm$3.8} \\
SHAPIQ          & 100   & 87.5$\pm$9.6                & 0.990$\pm$0.009 & 7.9$\pm$4.1 \\
SHAPIQ          & 1000  & 635$\pm$214                 & 0.980$\pm$0.019 & 15.4$\pm$6.9 \\
\midrule
\multicolumn{5}{c}{\textbf{Order $k=2$}} \\
\midrule
\textbf{TN-SHAP} & --    & \textbf{3.9$\pm$0.3} (69.3) & \textbf{0.637$\pm$0.187} & 3.9$\pm$3.4 \\
SHAPIQ          & 100   & 87.5$\pm$8.9                & 0.498$\pm$0.204 & 5.7$\pm$2.5 \\
SHAPIQ          & 1000  & 633$\pm$197                 & 0.633$\pm$0.177 & 3.38$\pm$1.21 \\
\midrule
\multicolumn{5}{c}{\textbf{Order $k=3$}} \\
\midrule
\textbf{TN-SHAP} & --    & \textbf{5.1$\pm$0.4} (70.5) & {0.143$\pm$0.395} & {1.9$\pm$2.8} \\
SHAPIQ          & 100   & 178$\pm$23                  & 0.085$\pm$0.152 & 1.9$\pm$1.0 \\
SHAPIQ          & 1000  & 738$\pm$220                 & \textbf{0.175$\pm$0.209} & 2.83$\pm$0.67 \\
\bottomrule
\label{tab:diabetes_k123_concise}
\end{tabular}
\end{table}

Figure~\ref{fig:scatter_concrete} and Table~\ref{tab:diabetes_k123_concise} show a clear accuracy–efficiency advantage for TN-SHAP on the Concrete case study and diabetes dataset (averaged over 89 points, entire test set). For $k{=}1,2,3$ , TN-SHAP attains comparable cosine similarity while operating at millisecond scale, whereas SHAPIQ baselines (KernelSHAP-IQ (Reg SII)~\citep{fumagalli2024kernelshap}, SHAP-IQ (Monte Carlo)~\citep{shapiq})
 requires hundreds of milliseconds to reach comparable (often lower) accuracy. The log-scaled scatter further highlights a persistent $\sim$1–2$\times$10 speed gap at similar or better similarity. In short, the local TN surrogate delivers strong practical accuracy with dramatically reduced attribution time for higher-order interactions. 
\paragraph{Practical considerations.}
\textit{Choice of feature map.}
Across datasets, a simple \emph{binary feature map} (i.e., using $\tilde{x}_i = [x_i,1]$) works well for first–order attributions ($k{=}1$), but we observe a consistent drop in cosine similarity for higher–order interactions ($k{=}2,3$). To probe whether this is a representational bottleneck rather than an algorithmic one, we replaced the binary map with the simplest learned embedding: an MLP with one hidden layer (64 ReLU units) followed by a linear output \emph{without bias} that produces a \emph{one–dimensional} feature map. This 1D learned map significantly improves the cosine similarity for $k{=}2,3$ across our benchmarks, while leaving $k{=}1$ essentially unchanged. A broader study of higher–dimensional learned maps (and their effect on interaction orders $k{\ge}2$) is deferred to  Appendix~\ref{app: feature_map_exps}.
\\
\textit{Multilinearity.}
Neural networks are not globally multilinear, but in practice they are often \emph{locally} close enough. TN-SHAP works best in small neighborhoods around the query point, after simple feature preprocessing that smooths local behavior (our 64-unit ReLU feature maps) and when the local function has low effective interaction order. We measure this by tracking the surrogate $R^2$ in the local cohort. The method can struggle when activations saturate or the function has sharp regime changes; in those cases, we found that using richer feature maps, increasing the tensor rank (we find $\chi\!=\!15$–$20$ usually suffices), or shrinking the neighborhood solves the problem. Despite these caveats, across UCI benchmarks we find enough local multilinearity for TN-SHAP to deliver large speedups while keeping attribution quality on par with—or better than—sampling methods.

\subsection{Synthetic Experiments: Understanding Rank Requirements}
To understand how tensor rank affects approximation quality, we conduct controlled experiments on synthetic teachers where ground truth is exactly computable.
\paragraph{Setup.} We train student TNs of varying rank to approximate teachers with known structure: (1) a TN-tree teacher with rank 16, (2) a low-rank TN-tree with rank 3, and (3) an exactly multilinear function~(which would correspond to a full rank TN-tree). This allows us to isolate the effect of rank mismatch.

\begin{table}[t]
\centering
\footnotesize
\setlength{\tabcolsep}{3pt}
\renewcommand{\arraystretch}{0.92}
\caption{Rank ablation vs.\ rank-14 tensor tree teacher. First column is student's training $R^2$ and follows order-wise Shapley interactions vs exact ground-truth $R^2$.}
\label{tab:rank-ablation}
\begin{tabular}{@{}rcccc@{}}
\toprule
&&  \multicolumn{3}{c}{$R^2$ by interaction order} \\
\toprule
Rank & Train $R^2$ & $k{=}1$ & $k{=}2$ & $k{=}3$ \\
\midrule
2  & 0.704 & 0.895 & 0.853 & 0.731 \\
3  & 0.797 & 0.963 & 0.903 & 0.736 \\
4  & 0.860 & 0.964 & 0.962 & 0.911 \\
5  & 0.978 & 0.998 & 0.998 & 0.990 \\
6  & 0.991 & 0.999 & 0.997 & 0.995 \\
8  & 0.999 & 1.000 & 1.000 & 1.000 \\
10 & 1.000 & 1.000 & 1.000 & 1.000 \\
\bottomrule
\end{tabular}
\vspace{-0.4em}
\end{table}

We present here the results for the rank 16 TN-tree teacher~(results for the other teachers are given in Appendix~\ref{app:teacher-student-rank-sweep}). In this case,
Table~\ref{tab:rank-ablation} reveals a critical phase transition. With insufficient rank ($r \leq 3$), the student captures first-order effects reasonably well (SII $R^2 = 0.96$ at $r=3$) but underfits higher-order interactions (order-3 $R^2 = 0.74$). At $r=4$, order-3 accuracy jumps to $R^2 = 0.91$, and by $r=5$--$6$, all orders achieve $R^2 > 0.99$. Beyond this threshold, additional rank provides diminishing returns.

\begin{figure}[t]
\centering
\includegraphics[width=.92\columnwidth]{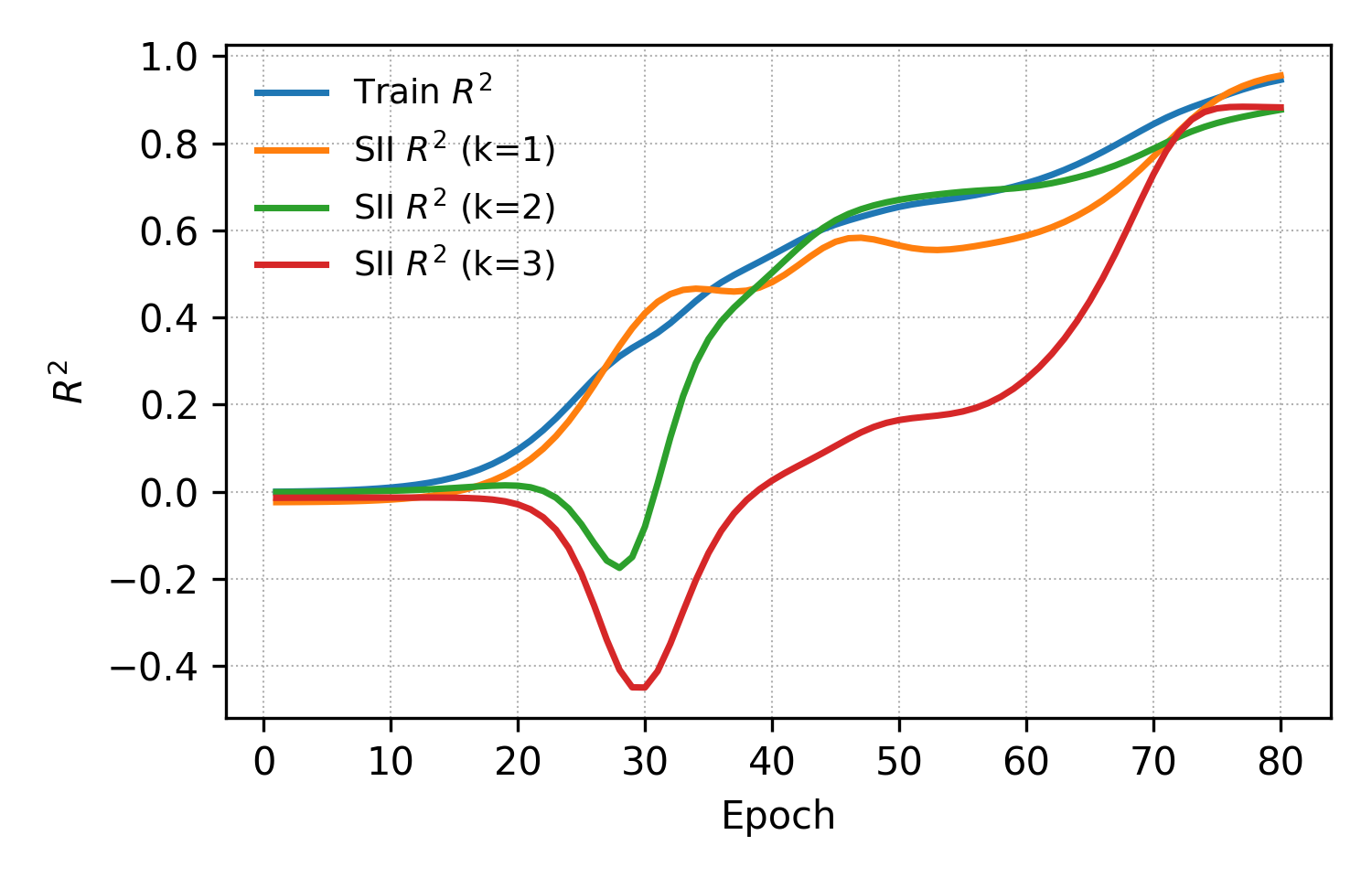}
\vspace{-0.25em}
\caption{Training dynamics: $R^2$ vs.\ epoch for the student fit (Train $R^2$) and SII ($k{=}1,2,3$).}
\label{fig:training_sweep} 
\end{figure} 
Figure~\ref{fig:training_sweep} provides insight into the learning dynamics. Lower-order interactions stabilize early (order-1 plateaus by epoch 200), while higher orders continue improving. This hierarchical pattern suggests practical training strategies: monitor lower-order convergence as an early stopping criterion, and increase rank if higher orders plateau below acceptable accuracy.
TN-SHAP delivers 20--100× speedups with superior accuracy on real data and exact recovery on synthetic multilinear functions. Rank $r=5$--$6$ suffices for accurate recovery of all interaction orders, with hierarchical training dynamics providing natural convergence indicators.

{\subsection{High-dimensional Top-$k$ Recovery Experiment}
To test whether \methodname remains accurate beyond the low-dimensional settings, we test on  a higher-dimensional synthetic data following the protocol in \citep{rebuttal_mohammadi}. We evaluate three function families: degree-5 polynomial (\textsc{poly5}), degree-10 polynomial (\textsc{poly10}), and squared exponential (\textsc{sqexp}), at dimensions $D\in\{50,100\}$ using a tensor-train (TT/MPS) surrogate repeated over 100 random seeds. In each setting, we use a linear-cost query budget, corresponding to 500 teacher queries for $D=50$ and 1000 for $D=100$. We report both Top-$k$ significant-feature recovery accuracy and surrogate fidelity ($R^2$), together with surrogate training time and post-fit attribution time. As shown in Table~\ref{tab:mohammadi_highdim}, \methodname remains stable and accurate in these higher-dimensional regimes: surrogate fidelity stays high across all settings, and Top-$k$ recovery is strong even under the fixed query budget.} 
\begin{table}[t]
\centering
\setlength{\tabcolsep}{3.8pt}
\begin{tabular}{lccccc}
\toprule
Task & $D$ & Top-$K$ $\uparrow$ & $R^2$ $\uparrow$ & Train & Eval \\
\midrule
poly5  & 50  & $0.84 \pm 0.14$ & $0.95 \pm 0.05$ & 6.47  & 2.88 \\
poly5  & 100 & $0.90 \pm 0.06$ & $0.96 \pm 0.06$ & 14.73 & 3.44 \\
poly10 & 50  & $0.64 \pm 0.20$ & $0.90 \pm 0.09$ & 7.08  & 3.12 \\
poly10 & 100 & $0.76 \pm 0.15$ & $0.94 \pm 0.07$ & 14.88 & 3.50 \\
sqexp  & 50  & $0.85 \pm 0.08$ & $0.98 \pm 0.00$ & 8.51  & 3.12 \\
sqexp  & 100 & $0.98 \pm 0.02$ & $1.00 \pm 0.00$ & 25.72 & 3.15 \\
\bottomrule
\end{tabular}
\caption{\textbf{High-dimensional synthetic evaluation under a linear-cost query budget.}
 Following the high-dimensional evaluation protocol of \citet{rebuttal_mohammadi}. In each task, the target function depends on only 10\% of the input dimensions, and we report Top-$K$ significant-feature recovery accuracy, surrogate fidelity ($R^2$), surrogate training time, and attribution time (Eval).}
\label{tab:mohammadi_highdim}
\end{table}
\section{RELATED WORK}
\paragraph{SHAP computation methods}
SHAP computation spans three categories: model-specific exact methods (TreeSHAP\citep{treeshap}, DeepSHAP\citep{deeplift}), sampling-based approximations (KernelSHAP\citep{lundberg2017unified}, SAGE\citep{sage}), and amortized approaches (FastSHAP\citep{fastshap}). 
\methodname computes exact $k$-th order Shapley interactions using a small, structured set of evaluations with built-in reliability diagnostics via tensor decomposition quality, unique among current methods.

\paragraph{\textbf{Shapley values and GAMs.}}
A complementary line of work (\citep{bordt2023shapley},\citep{instashap}, \citep{park2025tensor}) shows that Shapley‑based explanations correspond to generalized additive models (GAMs) with interaction terms. \citet{bordt2023shapley} prove that \(n\)-Shapley values, Shapley–Taylor interactions \citep{taylor} and Faith‑Shap\citep{tsai2023faith} recover GAMs with interactions up to order \(n\); setting \(n{=}1\) corresponds to the classical Shapley value, which recovers an additive decomposition without interactions. \methodname bridges local Shapley explanations and global GAMs: fitting a order-$k$ surrogate yields exact $k$-SII coefficients that match the corresponding GAM terms, unifying local attributions with global additive structure.

\paragraph{Tensor networks and low-rank structure in ML.}
Prior TN methods store precomputed indices—Sobol values \citep{ballester2019sobol}, cooperative game solutions \citep{cooperativeTT}, or sparse interaction tensors \citep{constructiveTT}. While prior TN methods \citep{ballester2019sobol, cooperativeTT} use tensor decompositions to store precomputed Shapley values, we employ TNs as \emph{computational architectures}: feature maps lift inputs to a multilinear space where Shapley queries become direct tensor contractions, enabling exact $k$-SII computation in $O(2^k(n-k+1) \cdot \text{poly}(\chi))$ time without sampling or enumeration. Constructive schemes \citep{constructiveTT} provide sparse cores for interaction tensors, but still focus on representation rather than direct computation.
\paragraph{Local surrogates and faithfulness}
Local surrogate methods such as LIME \citep{ribeiro2016lime} and Anchors \citep{ribeiro2018anchors} explain individual predictions by fitting interpretable models around specific instances. However, recent work has exposed critical faithfulness issues with these approaches \citep{adebayo2018sanity,slack2020fooling,laugel2019dangers,NEURIPS2023_nxaibenchmark,shapsensitivity2025}, demonstrating that local surrogates can be manipulated or may fail to capture the true decision boundary of the underlying model. \methodname provide structural guarantees through rank bounds while computing exact SHAP values, addressing the faithfulness issues of unconstrained local methods like LIME.
\paragraph{Surrogate and kernel based Shapley methods.}
\methodname is related to methods that approximate Shapley by exploiting structure in the model. \textsc{RKHS-SHAP} derives Shapley values for kernel machines using kernel mean embeddings, avoiding Monte Carlo estimation at attribution time~\citep{chau2022rkhs_shap}. Product-kernel and fANOVA Gaussian-process methods similarly obtain exact or polynomial-time Shapley computation by leveraging strong kernel-factorization assumptions~\citep{rebuttal_mohammadi,mohammadi2025fanova}. \textsc{FourierSHAP} represents binary-input predictors through a sparse Fourier expansion and then computes SHAP in closed form ~\citep{gorji2024shap}. Regression-adjusted Monte Carlo methods combine sample reuse with a learned regression model to reduce variance~\citep{witter2025regression}. \textsc{ProxySPEX} fits structured surrogates to recover a sparse set of influential interactions efficiently~\citep{butler2025proxyspex}. These methods are complementary. \methodname targets functions that admit a low-rank multilinear tensor-network surrogate in a learned feature space.

\paragraph{\textbf{Limitations and future work}}
{The tensor network fit of \methodname  depends on the chosen tensor-network topology and, for structured factorizations such as tensor trains (TT/MPS) and tensor trees (HT/TTN), on the ordering or grouping of features. In this work, we use simple random feature orderings / groupings in our experiments, and found that this already works well empirically. Still, we do not claim that random ordering is optimal. Better strategies are possible and could further improve both approximation quality and computational efficiency, especially in higher dimensions.} There is room for further speedups and degree-wise regularization (e.g., \citep{convy2022interaction}). A complementary direction is to relate tensor–network \emph{rank} to impossibility and robustness results in explainability \citep{gunther2025informativ}, clarifying the boundary between tractable and provably intractable attribution regimes. {Another promising direction for future work is to study whether structure-exploiting surrogates such as \methodname could be useful in settings such as safety research and mechanistic interpretability, where high variance explanations are undesirable. More broadly, this viewpoint aligns with leveraging information structure instead of random sampling, as discussed in \citep{neyman2024algorithmic, neyman2025competing_sampling}.}
\section{CONCLUSION}
We presented \methodname, a tensor–network approach that turns Shapley values and Shapley-style interactions from an exponential enumeration problem into a small number of structured evaluations. By combining selector-weighted contractions with a single Vandermonde solve, \methodname reduces the cost from $O(2^n)$ coalitions to $O\!\big(n\,\mathrm{poly}(\chi)\big)$ forward passes, is exact on a multilinear surrogate, and comes with tractability and approximation guarantees. On UCI regressors, \methodname achieves millisecond-scale per-instance attributions and $25$–$1000\times$ wall-clock speedups over sampling baselines at comparable accuracy, while amortizing training across local cohorts.
\section{ACKNOWLEDGMENTS}
Guillaume Rabusseau acknowledges the support of the CIFAR AI Chair program. This work was supported by the Mitacs Globalink Research Award program and RIKEN. Chao Li was supported by the JSPS KAKENHI Grant Numbers JP24K03005. Farzaneh Heidari thanks Tensor Learning Team at RIKEN AIP for their insightful discussions and for welcoming her during her research stay at RIKEN AIP. 
\bibliographystyle{apalike}
\bibliography{references}

@book{goodfellow2016deep,
title={Deep learning},
author={Goodfellow, Ian and Bengio, Yoshua and Courville, Aaron and Bengio, Yoshua},
volume={1},
year={2016},
publisher={MIT Press}
}

@article{ballester2019sobol,
  title={Sobol tensor trains for global sensitivity analysis},
  author={Ballester-Ripoll, Rafael and Paredes, Enrique G. and Pajarola, Renato},
  journal={Reliability Engineering \& System Safety},
  volume={183},
  pages={311--322},
  year={2019}
}

@incollection{shapley:book1952,
  title    = {A Value for n-Person Games},
  author   = {Shapley, Lloyd S},
  booktitle = {Contributions to the Theory of Games II},
  editor   = {Kuhn, Harold W. and Tucker, Albert W.},
  pages    = {307--317},
  year     = {1953},
  publisher = {Princeton University Press},
  address  = {Princeton}
}

@inproceedings{bordt2023shapley,
  title={From Shapley values to generalized additive models and back},
  author={Bordt, Sebastian and von Luxburg, Ulrike},
  booktitle={International Conference on Artificial Intelligence and Statistics},
  pages={709--745},
  year={2023},
  organization={PMLR}
}

@inproceedings{lundberg2017unified,
  title     = {A Unified Approach to Interpreting Model Predictions},
  author    = {Lundberg, Scott M. and Lee, Su-In},
  booktitle = {Advances in Neural Information Processing Systems},
  volume    = {30},
  pages     = {4765--4774},
  year      = {2017}
}

@article{fumagalli2024kernelshap,
  title={KernelSHAP-IQ: Weighted least-square optimization for Shapley interactions},
  author={Fumagalli, Fabian and Muschalik, Maximilian and Kolpaczki, Patrick and H{\"u}llermeier, Eyke and Hammer, Barbara},
  journal={arXiv preprint arXiv:2405.10852},
  year={2024}
}

@article{Owen1972,
  author = {Owen, Guillermo},
  title = {Multi-Linear Extensions of Games},
  journal = {Management Science},
  volume = {18},
  number = {5},
  pages = {P64--P79},
  year = {1972},
  publisher = {INFORMS}
}

@book{Roth1988,
  editor    = {Roth, Alvin E.},
  title     = {The Shapley Value: Essays in Honor of Lloyd S. Shapley},
  publisher = {Cambridge University Press},
  year      = {1988},
  address   = {Cambridge, UK and New York, NY, USA},
  series    = {Studies in Game Theory},
  isbn      = {0-521-36177-X},
}

@inproceedings{ribeiro2016lime,
  title={"Why should I trust you?" Explaining the predictions of any classifier},
  author={Ribeiro, Marco Tulio and Singh, Sameer and Guestrin, Carlos},
  booktitle={Proceedings of the 22nd ACM SIGKDD international conference on knowledge discovery and data mining},
  pages={1135--1144},
  year={2016}
}

@inproceedings{ribeiro2018anchors,
  title={Anchors: High-precision model-agnostic explanations},
  author={Ribeiro, Marco Tulio and Singh, Sameer and Guestrin, Carlos},
  booktitle={Proceedings of the AAAI Conference on Artificial Intelligence},
  volume={32},
  number={1},
  year={2018}
}

@inproceedings{adebayo2018sanity,
  title={Sanity checks for saliency maps},
  author={Adebayo, Julius and Gilmer, Justin and Muelly, Michael and Goodfellow, Ian and Hardt, Moritz and Kim, Been},
  booktitle={Advances in Neural Information Processing Systems},
  pages={9505--9515},
  year={2018}
}

@inproceedings{slack2020fooling,
  title={Fooling lime and shap: Adversarial attacks on post hoc explanation methods},
  author={Slack, Dylan and Hilgard, Sophie and Jia, Emily and Singh, Sameer and Lakkaraju, Himabindu},
  booktitle={Proceedings of the AAAI/ACM Conference on AI, Ethics, and Society},
  pages={180--186},
  year={2020}
}

@inproceedings{laugel2019dangers,
  title={The dangers of post-hoc interpretability: Unjustified counterfactual explanations},
  author={Laugel, Thibault and Lesot, Marie-Jeanne and Marsala, Christophe and Renard, Xavier and Detyniecki, Marcin},
  booktitle={Proceedings of the 28th International Joint Conference on Artificial Intelligence},
  pages={2801--2807},
  year={2019}
}

@article{kolda2009,
author = {Kolda, Tamara G. and Bader, Brett W.},
title = {Tensor Decompositions and Applications},
journal = {SIAM Review},
volume = {51},
number = {3},
pages = {455-500},
year = {2009},
doi = {10.1137/07070111X},

URL = { 
    
        https://doi.org/10.1137/07070111X
    
    

},
eprint = { 
    
        https://doi.org/10.1137/07070111X
    
    

}
,
    abstract = { Abstract. This survey provides an overview of higher-order tensor decompositions, their applications, and available software. A tensor is a multidimensional or \(N\)-way array. Decompositions of higher-order tensors (i.e., \(N\)-way arrays with \(N \ge 3\)) have applications in psychometrics, chemometrics, signal processing, numerical linear algebra, computer vision, numerical analysis, data mining, neuroscience, graph analysis, and elsewhere. Two particular tensor decompositions can be considered to be higher-order extensions of the matrix singular value decomposition: CANDECOMP/PARAFAC (CP) decomposes a tensor as a sum of rank-one tensors, and the Tucker decomposition is a higher-order form of principal component analysis. There are many other tensor decompositions, including INDSCAL, PARAFAC2, CANDELINC, DEDICOM, and PARATUCK2 as well as nonnegative variants of all of the above. The N-way Toolbox, Tensor Toolbox, and Multilinear Engine are examples of software packages for working with tensors. }
}

@article{oseledets2011tt,
author = {Oseledets, I. V.},
title = {Tensor-Train Decomposition},
journal = {SIAM Journal on Scientific Computing},
volume = {33},
number = {5},
pages = {2295-2317},
year = {2011},
doi = {10.1137/090752286},

URL = { 
    
        https://doi.org/10.1137/090752286
    
    

},
eprint = { 
    
        https://doi.org/10.1137/090752286
    
    

}
,
    abstract = { A simple nonrecursive form of the tensor decomposition in d dimensions is presented. It does not inherently suffer from the curse of dimensionality, it has asymptotically the same number of parameters as the canonical decomposition, but it is stable and its computation is based on low-rank approximation of auxiliary unfolding matrices. The new form gives a clear and convenient way to implement all basic operations efficiently. A fast rounding procedure is presented, as well as basic linear algebra operations. Examples showing the benefits of the decomposition are given, and the efficiency is demonstrated by the computation of the smallest eigenvalue of a 19-dimensional operator. }
}

@article{cichocki2016,
  title={Tensor networks for dimensionality reduction and large-scale optimization: Part 1 low-rank tensor decompositions},
  author={Cichocki, Andrzej and Lee, Namgil and Oseledets, Ivan and Phan, Anh-Huy and Zhao, Qibin and Mandic, Danilo P and others},
  journal={Foundations and Trends{\textregistered} in Machine Learning},
  volume={9},
  number={4-5},
  pages={249--429},
  year={2016},
  publisher={Now Publishers, Inc.}
}

@article{sage,
  title={Understanding global feature contributions with additive importance measures},
  author={Covert, Ian and Lundberg, Scott M and Lee, Su-In},
  journal={Advances in neural information processing systems},
  volume={33},
  pages={17212--17223},
  year={2020}
}

@article{treeshap,
  title={Consistent individualized feature attribution for tree ensembles},
  author={Lundberg, Scott M and Erion, Gabriel G and Lee, Su-In},
  journal={arXiv preprint arXiv:1802.03888},
  year={2018}
}

@inproceedings{deeplift,
  title={Learning important features through propagating activation differences},
  author={Shrikumar, Avanti and Greenside, Peyton and Kundaje, Anshul},
  booktitle={International conference on machine learning},
  pages={3145--3153},
  year={2017},
  organization={PMlR}
}

@article{fastshap,
  title={Fastshap: Real-time shapley value estimation},
  author={Jethani, Neil and Sudarshan, Mukund and Covert, Ian and Lee, Su-In and Ranganath, Rajesh},
  journal={ICLR 2022},
  year={2022}
}

@article{shapiq,
  title={shapiq: Shapley interactions for machine learning},
  author={Muschalik, Maximilian and Baniecki, Hubert and Fumagalli, Fabian and Kolpaczki, Patrick and Hammer, Barbara and H{\"u}llermeier, Eyke},
  journal={Advances in Neural Information Processing Systems},
  volume={37},
  pages={130324--130357},
  year={2024}
}

@article{park2025tensor,
  title={Tensor Product Neural Networks for Functional ANOVA Model},
  author={Park, Seokhun and Kong, Insung and Choi, Yongchan and Park, Chanmoo and Kim, Yongdai},
  journal={arXiv preprint arXiv:2502.15215},
  year={2025}
}

@article{instashap,
  title={Instashap: Interpretable additive models explain shapley values instantly},
  author={Enouen, James and Liu, Yan},
  journal={arXiv preprint arXiv:2502.14177},
  year={2025}
}

@inproceedings{taylor,
  title={The shapley taylor interaction index},
  author={Sundararajan, Mukund and Dhamdhere, Kedar and Agarwal, Ashish},
  booktitle={International conference on machine learning},
  pages={9259--9268},
  year={2020},
  organization={PMLR}
}

@article{tsai2023faith,
  title={Faith-shap: The faithful shapley interaction index},
  author={Tsai, Che-Ping and Yeh, Chih-Kuan and Ravikumar, Pradeep},
  journal={Journal of Machine Learning Research},
  volume={24},
  number={94},
  pages={1--42},
  year={2023}
}

@article{gunther2025informativ,
  title={Informative Post-Hoc Explanations Only Exist for Simple Functions},
  author={G{\"u}nther, Eric and Szabados, Bal{\'a}zs and Bhattacharjee, Robi and Bordt, Sebastian and von Luxburg, Ulrike},
  journal={arXiv preprint arXiv:2508.11441},
  year={2025}
}

@article{cooperativeTT,
author = {Ballester-Ripoll, Rafael},
title = {Tensor approximation of cooperative games and their semivalues},
year = {2022},
issue_date = {Mar 2022},
publisher = {Elsevier Science Inc.},
address = {USA},
volume = {142},
number = {C},
issn = {0888-613X},
url = {https://doi.org/10.1016/j.ijar.2021.11.007},
doi = {10.1016/j.ijar.2021.11.007},
journal = {Int. J. Approx. Reasoning},
month = mar,
pages = {94–108},
numpages = {15},
keywords = {Cooperative game theory, Semivalues, Allocation rules, Computational game theory, Tensor approximation, Tensor train decomposition}
}

@article{constructiveTT,
  title={Constructive TT-representation of the tensors given as index interaction functions with applications},
  author={Ryzhakov, Gleb and Oseledets, Ivan},
  journal={arXiv preprint arXiv:2206.03832},
  year={2022}
}

@article{convy2022interaction,
  title={Interaction decompositions for tensor network regression},
  author={Convy, Ian and Whaley, K Birgitta},
  journal={Machine Learning: Science and Technology},
  volume={3},
  number={4},
  pages={045027},
  year={2022},
  publisher={IOP Publishing}
}

@article{grabisch1999axiomatic,
  title={An axiomatic approach to the concept of interaction among players in cooperative games},
  author={Grabisch, Michel and Roubens, Marc},
  journal={International Journal of game theory},
  volume={28},
  number={4},
  pages={547--565},
  year={1999},
  publisher={Springer}
}

@book{davis1963interpolation,
  title={Interpolation and Approximation},
  author={Davis, P.J.},
  isbn={9780486624952},
  lccn={75002568},
  series={Blaisdell book in pure and applied sciences. Introductions to higher mathematics},
  url={https://books.google.co.jp/books?id=228PAQAAMAAJ},
  year={1963},
  publisher={Dover Publications}
}

@misc{uci_diabetes,
  author       = {Dua, Dheeru and Graff, Casey},
  title        = {Diabetes},
  year         = {2019},
  howpublished = {UCI Machine Learning Repository},
  note         = {University of California, Irvine},
  url          = {https://archive.ics.uci.edu/dataset/34/diabetes}
}

@misc{uci_concrete,
  author       = {Yeh, I-Cheng},
  title        = {Concrete Compressive Strength},
  year         = {2007},
  howpublished = {UCI Machine Learning Repository},
  note         = {University of California, Irvine},
  url          = {https://archive.ics.uci.edu/dataset/165/concrete+compressive+strength}
}

@misc{uci_energy,
  author       = {Tsanas, Athanasios and Xifara, Angeliki},
  title        = {Energy Efficiency},
  year         = {2012},
  howpublished = {UCI Machine Learning Repository},
  note         = {University of California, Irvine},
  url          = {https://archive.ics.uci.edu/dataset/242/energy+efficiency}
}

@article{rumelhart1986learning,
  author  = {Rumelhart, David E. and Hinton, Geoffrey E. and Williams, Ronald J.},
  title   = {Learning representations by back-propagating errors},
  journal = {Nature},
  volume  = {323},
  number  = {6088},
  pages   = {533--536},
  year    = {1986},
  doi     = {10.1038/323533a0}
}

@inproceedings{shapsensitivity2025,
author = {Hwang, Hyunseung and Bell, Andrew and Fonseca, Joao and Pliatsika, Venetia and Stoyanovich, Julia and Whang, Steven Euijong},
title = {SHAP-based Explanations are Sensitive to Feature Representation},
year = {2025},
isbn = {9798400714825},
publisher = {Association for Computing Machinery},
address = {New York, NY, USA},
url = {https://doi.org/10.1145/3715275.3732105},
doi = {10.1145/3715275.3732105},
abstract = {Local feature-based explanations are a key component of the XAI toolkit. These explanations compute feature importance values relative to an “interpretable” feature representation. In tabular data, feature values themselves are often considered interpretable. This paper examines the impact of data engineering choices on local feature-based explanations. We demonstrate that simple, common data engineering techniques, such as representing age with a histogram or encoding race in a specific way, can manipulate feature importance as determined by popular methods like SHAP. Notably, the sensitivity of explanations to feature representation can be exploited by adversaries to obscure issues like discrimination. While the intuition behind these results is straightforward, their systematic exploration has been lacking. Previous work has focused on adversarial attacks on feature-based explainers by biasing data or manipulating models. To the best of our knowledge, this is the first study demonstrating that explainers can be misled by standard, seemingly innocuous data engineering techniques.},
booktitle = {Proceedings of the 2025 ACM Conference on Fairness, Accountability, and Transparency},
pages = {1588–1601},
numpages = {14},
keywords = {Explainable AI, SHAP, Feature Representation},
location = {
},
series = {FAccT '25}
}

@inproceedings{NEURIPS2023_nxaibenchmark,
 author = {Li, Xuhong and Du, Mengnan and Chen, Jiamin and Chai, Yekun and Lakkaraju, Himabindu and Xiong, Haoyi},
 booktitle = {Advances in Neural Information Processing Systems},
 editor = {A. Oh and T. Naumann and A. Globerson and K. Saenko and M. Hardt and S. Levine},
 pages = {1630--1643},
 publisher = {Curran Associates, Inc.},
 title = {$\mathcal{M}^4$: A Unified XAI Benchmark for Faithfulness Evaluation of Feature Attribution Methods across Metrics, Modalities and Models},
 url = {https://proceedings.neurips.cc/paper_files/paper/2023/file/05957c194f4c77ac9d91e1374d2def6b-Paper-Datasets_and_Benchmarks.pdf},
 volume = {36},
 year = {2023}
}

@article{pedregosa2011scikit,
  title   = {Scikit-learn: Machine Learning in Python},
  author  = {Pedregosa, Fabian and Varoquaux, Ga{\"e}l and Gramfort, Alexandre and Michel, Vincent and Thirion, Bertrand and Grisel, Olivier and Blondel, Mathieu and Prettenhofer, Peter and Weiss, Ron and Dubourg, Vincent and Vanderplas, Jake and Passos, Alexandre and Cournapeau, David and Brucher, Matthieu and Perrot, Matthieu and Duchesnay, {\'E}douard},
  journal = {Journal of Machine Learning Research},
  volume  = {12},
  pages   = {2825--2830},
  year    = {2011}
}

@inproceedings{paszke2019pytorch,
  title        = {PyTorch: An imperative style, high-performance deep learning library},
  author       = {Paszke, Adam and Gross, Sam and Massa, Francisco and Lerer, Adam and Bradbury, James and Chanan, Gregory and Killeen, Trevor and Lin, Zeming and Gimelshein, Natalia and Antiga, Luca and Desmaison, Alban and K{\"o}pf, Andreas and Yang, Edward and DeVito, Zach and Raison, Martin and Tejani, Alykhan and Chilamkurthy, Sasank and Steiner, Benoit and Fang, Lu and Bai, Junjie and Chintala, Soumith},
  booktitle    = {Advances in Neural Information Processing Systems},
  year         = {2019}
}

@inproceedings{nair2010rectified,
  title     = {Rectified Linear Units Improve Restricted Boltzmann Machines},
  author    = {Nair, Vinod and Hinton, Geoffrey E.},
  booktitle = {Proceedings of the 27th International Conference on Machine Learning (ICML)},
  pages     = {807--814},
  year      = {2010}
}

@book{trefethen2013approximation,
  title     = {Approximation Theory and Approximation Practice},
  author    = {Trefethen, Lloyd N.},
  year      = {2013},
  publisher = {SIAM},
  address   = {Philadelphia, PA}
}

@article{grasedyck2010hierarchical,
  title     = {Hierarchical Singular Value Decomposition of Tensors},
  author    = {Grasedyck, Lars},
  journal   = {SIAM Journal on Matrix Analysis and Applications},
  volume    = {31},
  number    = {4},
  pages     = {2029--2054},
  year      = {2010}
}

@book{rudin1976principles,
  title={Principles of Mathematical Analysis},
  author={Rudin, Walter},
  edition={3},
  publisher={McGraw--Hill},
  year={1976}
}

@book{horn2012matrix,
  title={Matrix Analysis},
  author={Horn, Roger A. and Johnson, Charles R.},
  edition={2},
  publisher={Cambridge University Press},
  year={2012}
}

@article{levine2019quantum,
  title={Quantum entanglement in deep learning architectures},
  author={Levine, Yoav and Sharir, Or and Cohen, Nadav and Shashua, Amnon},
  journal={Physical review letters},
  volume={122},
  number={6},
  pages={065301},
  year={2019},
  publisher={APS}
}

@article{bjorck1970vandermonde,
  title={Solution of Vandermonde systems of equations},
  author={Björck, {\AA}ke and Pereyra, Victor},
  journal={Mathematics of Computation},
  volume={24},
  number={112},
  pages={893--903},
  year={1970}
}

@inproceedings{sundararajan2020many,
  title={The many {S}hapley values for model explanation},
  author={Sundararajan, Mukund and Najmi, Amir},
  booktitle={Proceedings of the 37th International Conference on Machine Learning (ICML)},
  pages={9269--9278},
  year={2020},
  organization={PMLR}
}

@article{rebuttal_mohammadi,
  title={Computing Exact Shapley Values in Polynomial Time for Product-Kernel Methods},
  author={Mohammadi, Majid and Chau, Siu Lun and Muandet, Krikamol},
  journal={arXiv preprint arXiv:2505.16516},
  year={2025},
  url={https://arxiv.org/abs/2505.16516}
}

@book{neyman2024algorithmic,
  title={Algorithmic bayesian epistemology},
  author={Neyman, Eric},
  year={2024},
  publisher={Columbia University}
}

@inproceedings{chau2022rkhs_shap,
  title     = {RKHS-SHAP: Shapley Values for Kernel Methods},
  author    = {Chau, Siu Lun and Hu, Robert and Gonzalez, Javier and Sejdinovic, Dino},
  booktitle = {Advances in Neural Information Processing Systems 36 (NeurIPS 2022)},
  year      = {2022},
  url       = {https://proceedings.neurips.cc/paper_files/paper/2022/hash/54bb63eaec676b87a2278a22b1bd02a2-Abstract-Conference.html}
}

@article{mohammadi2025fanova,
  title   = {Exact Shapley Attributions in Quadratic-time for FANOVA Gaussian Processes},
  author  = {Mohammadi, Majid and Muandet, Krikamol and Tiddi, Ilaria and Ten Teije, Annette and Chau, Siu Lun},
  journal = {arXiv preprint arXiv:2508.14499},
  year    = {2025},
  url     = {https://arxiv.org/abs/2508.14499}
}

@article{gorji2024shap,
  title={SHAP values via sparse Fourier representation},
  author={Gorji, Ali and Amrollahi, Andisheh and Krause, Andreas},
  journal={arXiv preprint arXiv:2410.06300},
  year={2024}
}

@article{witter2025regression,
  title={Regression-adjusted monte carlo estimators for shapley values and probabilistic values},
  author={Witter, R Teal and Liu, Yurong and Musco, Christopher},
  journal={arXiv preprint arXiv:2506.11849},
  year={2025}
}

@article{butler2025proxyspex,
  title={Proxyspex: Inference-efficient interpretability via sparse feature interactions in llms},
  author={Butler, Landon and Agarwal, Abhineet and Kang, Justin Singh and Erginbas, Yigit Efe and Yu, Bin and Ramchandran, Kannan},
  journal={arXiv preprint arXiv:2505.17495},
  year={2025}
}

@online{neyman2025competing_sampling,
  author  = {Eric Neyman},
  title   = {Competing with sampling},
  year    = {2025},
  month   = nov,
  day     = {18},
  organization = {Alignment Research Center},
  url     = {https://www.alignment.org/blog/competing-with-sampling/},
  urldate = {2026-03-31}
}

@article{grabisch2000equivalent,
  title={Equivalent representations of set functions},
  author={Grabisch, Michel and Marichal, Jean-Luc and Roubens, Marc},
  journal={Mathematics of Operations Research},
  volume={25},
  number={2},
  pages={157--178},
  year={2000},
  publisher={INFORMS}
}

\clearpage
\appendix
\onecolumn
\section*{Supplementary Materials}

This supplement provides theoretical foundations, algorithmic details, and extended experimental results for TN-SHAP. We begin with complete proofs (Section~\ref{app:proofs}), technical details on feature maps and selectors (Section~\ref{app: feature_maps}) and algorithms for k-way interactions (Section~\ref{app:ksii}); then conclude with  comprehensive experimental protocols and additional results (Section~\ref{app: experiment}).

\section{Proofs of Main Theorems}
\label{app:proofs}
We provide complete proofs for the tractability and approximation theorems stated in Section~\ref{sec:method} of the main paper. These results establish that TN-SHAP achieves exponential speedup from $O(2^n)$ to $O(n\cdot\text{poly}(\chi) + n^2)$ while maintaining controlled approximation error. The proofs rely on the multilinear structure of the surrogate and the size-aggregation property of Shapley weights, showing how diagonal selectors enable efficient extraction of all coalition values through polynomial interpolation.
\begin{proof}[Proof of Theorem~\ref{thm:approximation}]
\label{app: approx_theorem}
Fix a center point $x$. For any model $h$, we abbreviate the coalition value function as 
$v_h(C):=v_h(x,C)$ and define the local Shapley value of feature $i$ as
\[
\Phi_i^{h}
= \sum_{C\subseteq N\setminus\{i\}} 
w(|C|,n)\bigl(v_h(C\cup\{i\}) - v_h(C)\bigr),
\]
where $w(|C|,n)=\frac{|C|!\,(n-|C|-1)!}{n!}$ are the standard Shapley weights. 
Let $v_f$ and $v_g$ denote the coalition value functions for the original target function $f$ 
and for its multilinear (TN) surrogate $g$, respectively. 
Assume that the surrogate uniformly approximates the true coalition values within $\varepsilon$, i.e.,
\[
\sup_{C\subseteq N} \bigl|v_g(C)-v_f(C)\bigr|\le \varepsilon.
\]

For a fixed feature $i\in N$, we have
\[
\begin{aligned}
\bigl|\Phi_i^g-\Phi_i^f\bigr|
&=\Bigl|\sum_{C\subseteq N\setminus\{i\}} 
w(|C|,n)\,\Bigl([v_g(C\cup\{i\})-v_f(C\cup\{i\})]
-[v_g(C)-v_f(C)]\Bigr)\Bigr|\\
&\le \sum_{C\subseteq N\setminus\{i\}} |w(|C|,n)|
\Bigl(\,|v_g(C\cup\{i\})-v_f(C\cup\{i\})|
+|v_g(C)-v_f(C)|\,\Bigr)\\
&\le 2\varepsilon \sum_{C\subseteq N\setminus\{i\}} |w(|C|,n)|.
\end{aligned}
\]
The inequality follows from the triangle inequality (or equivalently, from the Cauchy–Schwarz inequality~\citep[see, e.g.,][]{rudin1976principles,horn2012matrix}). 
Grouping subsets by their size $s=|C|$ (there are $\binom{n-1}{s}$ such coalitions) yields
\[
\bigl|\Phi_i^g-\Phi_i^f\bigr|
\;\le\;
2\varepsilon \sum_{s=0}^{n-1} |w(s,n)|\,\binom{n-1}{s},
\]
which establishes the desired bound. \qedhere
\end{proof}

\begin{proof}[Proof of Theorem~\ref{thm:tractability}]
\label{app: tractability}
Fix a center point $x\in\mathbb{R}^n$.  
For each feature $j$, define the lifted input $\tilde x_j=[x_j,\,1]^\top\in\mathbb{R}^2$ and the diagonal
selector
\[
S_j(t)=\Diag(t,\,1)\in\mathbb{R}^{2\times2},\qquad t\in[0,1].
\]
Setting $t=1$ \emph{includes} feature $j$ (uses $[x_j,1]^\top$), while $t=0$ \emph{excludes} it by
zeroing the data channel and retaining the bias, $[0,1]^\top$.  
Thus, for any coalition $C\subseteq N$, the coalition value $v_g(x,C)$ is obtained by using
$S_j(1)\tilde x_j$ for $j\in C$ and $S_j(0)\tilde x_j$ for $j\notin C$.

Let $g$ be the multilinear TN surrogate realizing $\mathcal{T}$ with maximum bond dimension $\chi$;
one forward evaluation of $g$ costs $O(\mathrm{poly}(\chi))$~\citep{levine2019quantum, oseledets2011tt, grasedyck2010hierarchical}.  Write $v_g(C):=v_g(x,C)$.

\medskip
For the Inclusion–exclusion probe, fix $i\in N$. Define
\[
G_i(t)
:= g\!\big(\{S_j(t)\tilde x_j\}_{j\neq i},\,S_i(1)\tilde x_i\big)
 - g\!\big(\{S_j(t)\tilde x_j\}_{j\neq i},\,S_i(0)\tilde x_i\big).
\]
By multilinearity, $G_i(t)$ is a univariate polynomial of degree at most $n-1$ \emph{(here we focus on the size-$1$ case, i.e., the Shapley value for a single index $i$)}:
\[
G_i(t)=\sum_{s=0}^{n-1} m_s^{(i)}\,t^s,
\qquad
m_s^{(i)}=\!\!\sum_{\substack{C\subseteq N\setminus\{i\}\\ |C|=s}}\!\!
\bigl[v_g(C\cup\{i\})-v_g(C)\bigr].
\]
In this formulation, each factor $S_j(t)$ contributes a factor $t$ exactly when $j$ is \emph{included} in $C$; the two evaluations with $S_i(1)$ and $S_i(0)$ then implement inclusion/exclusion on $i$, producing
the Shapley marginal $v_g(C\cup\{i\})-v_g(C)$.
\medskip
By multilinearity, $G_i(t)$ is a univariate polynomial of degree at most $n-1$ (size-$1$ Shapley for index $i$); to recover its coefficients, we interpolate $G_i$ at $n$ distinct points $\{t_\ell\}_{\ell=0}^{n-1}\subset(0,1)$ to form the Vandermonde system

\[
V\,m^{(i)}=q,\qquad
V_{\ell+1,s+1}=t_\ell^{\,s},\ 
m^{(i)}=(m_0^{(i)},\ldots,m_{n-1}^{(i)})^\top,\ 
q_\ell=G_i(t_\ell).
\]
Each $q_\ell$ requires two forward passes (inclusion and exclusion of $i$) each costing 
$O\!\big(\mathrm{poly}(\chi)\big)$. Since $G_i$ is evaluated at $n$ interpolation points, the total probing cost is 
{$\bm{O(n\cdot\mathrm{poly}(\chi))}$}. The coefficient vector $m^{(i)}$ is then obtained by solving $V\,m^{(i)}=q$ in {$\bm{O(n^2)}$} time using a stable Vandermonde solver~\citep{bjorck1970vandermonde}.

The resulting coefficients $\{m_s^{(i)}\}_{s=0}^{n-1}$ represent the aggregated marginal 
contributions of feature $i$ grouped by coalition size, 
so the Shapley index follows directly as 
\[
\Phi_i = \sum_{s=0}^{n-1} w(s,n)\,m_s^{(i)}.
\]
Forming this weighted combination requires only $O(n)$ additional operations, 
which is negligible compared to the probing and interpolation steps. 
The total cost is therefore $\bm{O(n\cdot\mathrm{poly}(\chi) + n^2)}$, 
establishing the polynomial-time tractability claim. \qedhere

\end{proof}



\section{Feature Maps and Selectors}
\label{app: feature_maps}
The feature map design is crucial for balancing expressiveness with computational tractability. Here we formalize the general $d_i$-dimensional lifting scheme sketched in the main paper, showing how various feature maps with arbitrary output dimensions can enhance surrogate fidelity (empirically in Section~\ref{app: feature_map_exps}) while preserving the multilinear structure required for exact Shapley computation. We also detail the thin diagonal selector construction and its role in maintaining constant TN topology during coalition probing.

\subsection{General Feature Map Construction}
We lift each scalar $x_i$ to $\tilde x_i=[\phi_i(x_i),\,1]^\top\in\R^{d_i}$, where $\phi_i:\R\to\R^{d_i-1}$ is a feature map providing a low-dimensional nonlinear embedding of the input. The last entry acts as a bias channel to preserve the affine closure of multilinear terms.

\subsection{Diagonal Selector Properties}

To toggle coalitions without altering TN topology, we use \emph{thin diagonal selectors}
\[
S_i(t)=\Diag(t\,I_{d_i-1},1)\in\R^{d_i\times d_i},
\]
which scale only the data-dependent channels and keep the bias channel intact:
\[
\tilde x_i=\begin{bmatrix}\phi_i(x_i)\\[2pt] 1\end{bmatrix}\in\mathbb{R}^{d_i},
\qquad
S_i(t)=\Diag\!\big(t\,I_{d_i-1},\,1\big)\in\mathbb{R}^{d_i\times d_i}.
\]

\[
S_i(1)\,\tilde x_i=\begin{bmatrix}\phi_i(x_i)\\[2pt] 1\end{bmatrix},
\qquad
S_i(0)\,\tilde x_i=\begin{bmatrix}\mathbf{0}_{d_i-1}\\[2pt] 1\end{bmatrix}.
\]

This design ensures that the surrogate remains multilinear in the lifted inputs and that inclusion/exclusion operations can be expressed by simple elementwise scaling rather than structural rewiring.

\subsection{Complexity and Stability Considerations}
Let $\chi$ denote the maximal bond dimension and $d_{\max}=\max_i d_i$. The forward cost per contraction scales as $O(\mathrm{poly}(\chi)\,d_{\max})$.  
We observe numerically that increasing $d_i$ above 3–4 offers diminishing returns for $k\!\le\!3$ interactions (see Section~\ref{app: feature_map_exps}).  

{
\section{Tensor contraction view of \methodname}
The following lemma formally shows the equality between equations~\eqref{eq:Gi-poly} and~\eqref{eq:Gi-main}.
\label{app:Gi-proof-diag-t1-simple}
\begin{lemma}
For a fixed instance $x$ and target feature $i$, define the function $G_i(t;x)$ by
\begin{equation}
\begin{aligned}
G_i(t;x) &:= g\big(M(t)S_1(t)\tilde{x}_1,\dots, M(t)S_{i-1}(t)\tilde{x}_{i-1}, S_i(1)\tilde{x}_i, \\
&\qquad M(t)S_{i+1}(t)\tilde{x}_{i+1},\dots, M(t)S_n(t)\tilde{x}_n\big) \\
&\quad - g\big(M(t)S_1(t)\tilde{x}_1,\dots, M(t)S_{i-1}(t)\tilde{x}_{i-1}, S_i(0)\tilde{x}_i, \\
&\qquad M(t)S_{i+1}(t)\tilde{x}_{i+1},\dots, M(t)S_n(t)\tilde{x}_n\big),
\end{aligned}
\tag{\ref{eq:Gi-main}}
\end{equation}
where
$
M(t)=
\begin{bmatrix}
1 & 0\\
0 & t+1
\end{bmatrix}
$
and
$
S(t)=
\begin{bmatrix}
t & 0\\
0 & 1
\end{bmatrix}
$.

$G_i(t;x)$ is a polynomial in $t$ of degree at most $n-1$ given by
\begin{equation}
G_i(t;x)=\sum_{s=0}^{n-1} m_s^{(i)}(x)\, t^s, \tag{\ref{eq:Gi-poly}}
\end{equation}
where
\[
m_s^{(i)}(x)
=
\sum_{\substack{C\subseteq N\setminus\{i\}\\ |C|=s}}
\bigl(v(x,C\cup\{i\})-v(x,C)\bigr).
\]
\end{lemma}

\begin{proof}
Using the tensor form of the multilinear model,
\[
g(\tilde{z}_1,\dots,\tilde{z}_n)=T\times_1 \tilde{z}_1 \times_2 \tilde{z}_2 \cdots \times_n \tilde{z}_n,
\]
Equation \eqref{eq:Gi-main} becomes
\[
G_i(t;x)
=
T\times_{j\neq i} M(t)S_j(t)\tilde{x}_j \times_i S_i(1)\tilde{x}_i
-
T\times_{j\neq i} M(t)S_j(t)\tilde{x}_j \times_i S_i(0)\tilde{x}_i.
\]
Since
\[
S_i(1)\tilde{x}_i=\begin{pmatrix}x_i\\1\end{pmatrix},
\qquad
S_i(0)\tilde{x}_i=\begin{pmatrix}0\\1\end{pmatrix},
\]
and for every $j\neq i$,
\[
M(t)S_j(t)\tilde{x}_j
=
\begin{pmatrix}
t x_j\\
t+1
\end{pmatrix}
=
t\begin{pmatrix}x_j\\1\end{pmatrix}
+
\begin{pmatrix}0\\1\end{pmatrix},
\]
we obtain
\[
G_i(t;x)
=
T\times_{j\neq i}
\left(
t\begin{pmatrix}x_j\\1\end{pmatrix}
+
\begin{pmatrix}0\\1\end{pmatrix}
\right)
\times_i \begin{pmatrix}x_i\\1\end{pmatrix}
-
T\times_{j\neq i}
\left(
t\begin{pmatrix}x_j\\1\end{pmatrix}
+
\begin{pmatrix}0\\1\end{pmatrix}
\right)
\times_i \begin{pmatrix}0\\1\end{pmatrix}.
\]
By multilinearity, expanding the sum over the modes $j\neq i$ gives
\[
G_i(t;x)
=
\sum_{C\subseteq N\setminus\{i\}}
\Biggl[
T\times_{j\in C}\begin{pmatrix}x_j\\1\end{pmatrix}
\times_{j\notin C,\;j\neq i}\begin{pmatrix}0\\1\end{pmatrix}
\times_i\begin{pmatrix}x_i\\1\end{pmatrix}
-
T\times_{j\in C}\begin{pmatrix}x_j\\1\end{pmatrix}
\times_{j\notin C,\;j\neq i}\begin{pmatrix}0\\1\end{pmatrix}
\times_i\begin{pmatrix}0\\1\end{pmatrix}
\Biggr] t^{|C|}.
\]
The bracketed term is exactly
\[
v(x,C\cup\{i\})-v(x,C).
\]
Grouping terms by $s=|C|$ yields
\[
G_i(t;x)
=
\sum_{s=0}^{n-1}
\left(
\sum_{\substack{C\subseteq N\setminus\{i\}\\ |C|=s}}
\bigl(v(x,C\cup\{i\})-v(x,C)\bigr)
\right)t^s
=
\sum_{s=0}^{n-1} m_s^{(i)}(x)\, t^s,
\]
as claimed.
\end{proof}

{
\section{Connection to the Möbius transform.}
The same set function 
\label{Mobius_connection}
$v(x,\cdot):2^N\to\mathbb{R}$ also admits an equivalent expansion in terms of its Möbius coefficients~(or interaction coefficients). Define the Möbius transform of $v$ by
\begin{equation}
\mu(S):=\sum_{T\subseteq S}(-1)^{|S|-|T|}v(x,T),
\qquad S\subseteq N,
\label{eq:mobius}
\end{equation}
so that $v$ can be recovered as
\[
v(x,C)=\sum_{S\subseteq C}\mu(S).
\]
For any $C\subseteq N\setminus\{i\}$, the marginal contribution of feature $i$ is
\begin{equation}
\Delta_i v(C):=v(x,C\cup\{i\})-v(x,C)
=\sum_{S\subseteq C}\mu(S\cup\{i\}),
\label{eq:mob_fun}
\end{equation}
showing that only interaction terms involving feature $i$ contribute to $\Delta_i v(C)$.

Combining \eqref{eq:mob_fun} with the polynomial expansion of $G_i(t;x)$, we obtain
\begin{equation}
G_i(t;x)
=
\sum_{C\subseteq N\setminus\{i\}} \Delta_i v(C)\, t^{|C|},
\label{eq:mob_1}
\end{equation}
and exchanging the order of summation gives
\[
G_i(t;x)
=
\sum_{S\subseteq N\setminus\{i\}}
\mu(S\cup\{i\})
\sum_{C\supseteq S} t^{|C|}.
\]
Thus, $G_i(t;x)$ may equivalently be viewed as a polynomial whose coefficients aggregate Möbius interaction terms involving feature $i$, grouped by subset size.
Finally, summing up the marginal contribution over subset sizes recovers the standard Möbius-form expression for the Shapley value:
\begin{equation}
\Phi_i^{\mathrm{SV}}(x)
=
\sum_{\substack{S\subseteq N\\ i\in S}}
\frac{1}{|S|}\,\mu(S).
\label{eq:standard_mob}
\end{equation}
Therefore, the Shapley value averages all interaction terms containing feature $i$, with weight inversely proportional to their order.}}

\section{Order-$k$ Shapley Interactions}
\label{app:ksii}
We proceed in three steps. First, we present explicit algorithms for computing (i) pairwise ($k=2$) Shapley interaction indices~\S\ref{subsec:pairwise}and (ii) general order-$k$ indices~\citep{grabisch1999axiomatic,sundararajan2020many} using diagonal selector probes and polynomial interpolation (\S\ref{subsec:higher-order}). 
Second, we analyze the computational complexity of the general case: a direct inclusion–exclusion evaluation of $\Delta_C v_g(\cdot)$ requires $2^{k}$ forward passes per probe node, resulting in an overall cost of $O(2^{k} n\,\mathrm{poly}(\chi) + n^2)$ (\S\ref{higher_order_complexity}). 
Finally, we exploit multilinearity to derive the \emph{signed–toggle identity}
\[
\Delta_C v_g(x)
\;=\;
g\!\Big(\{S_j(t)\tilde x_j\}_{j\in N\setminus C},\,\{(S_i(1)-S_i(0))\tilde x_i\}_{i\in C}\Big),
\]

which collapses the $2^{k}$ inclusion–exclusion terms into a single evaluation per probe. This reduces the overall complexity to $O(n\,\mathrm{poly}(\chi) + n^2)$ while preserving the same interpolation-based recovery of size-aggregated coefficients~(\S\ref{subsec:signed-toggle}).

\subsection{Pairwise Interactions}
\label{subsec:pairwise}

To build intuition for our general framework, we begin with the simplest nontrivial case, $|C|=2$, where $C = \{i,j\}$. Pairwise interactions capture how the joint effect of features $i$ and $j$ differs from the sum of their individual effects~\citep{grabisch1999axiomatic}, and they illustrate the key components of our method, inclusion--exclusion, multilinearity, and polynomial interpolation, before extending to higher orders.

\paragraph{Inclusion--exclusion probe.}  
For a fixed pair $\{i,j\}$, the inclusion--exclusion principle expresses the second-order marginal contribution as a signed sum over all four inclusion patterns of the two features. For example, for $i=1$ and $j=2$
\begin{align}
Q_{\{i,j\}}(t;x) = 
&g(S_i(1)\tilde x_i,S_j(1)\tilde x_j,S_3(t),\dots,S_n(t)) 
- g(S_i(1)\tilde x_i,S_j(0)\tilde x_j,S_3(t),\dots,S_n(t)) \nonumber \\
-&g(S_i(0)\tilde x_i,S_j(1)\tilde x_j,S_3(t),\dots,S_n(t))
 + g(S_i(0)\tilde x_i,S_j(0)\tilde x_j,S_3(t),\dots,S_n(t)).
\label{eq:pairwise-probe}
\end{align}
Each term corresponds to one configuration of feature inclusion, and the alternating signs implement a discrete second-order finite difference over the lifted inputs. By multilinearity of $g$, $Q_{\{i,j\}}(t;x)$ is a polynomial in $t$ of degree at most $n-2$:
\[
Q_{\{i,j\}}(t;x) = \sum_{s=0}^{n-2} c^{(ij)}_s(x)\, t^s,
\]
where $c^{(ij)}_s(x)$ aggregates all coalitional contributions of size $s$ among the remaining $n-2$ features.

\paragraph{Polynomial interpolation.}  
Evaluating \eqref{eq:pairwise-probe} at $n-1$ distinct points $\{t_\ell\}_{\ell=0}^{n-2}\subset(0,1)$ gives the Vandermonde system
\[
V\,c^{(ij)} = q, \qquad V_{\ell+1,s+1} = t_\ell^{\,s}, \quad q_\ell = Q_{\{i,j\}}(t_\ell;x),
\]
whose solution $c^{(ij)}$ recovers all size-aggregated coefficients. These coefficients encode how the rest of the features interact jointly with the pair $\{i,j\}$.

\paragraph{Recovering the interaction index.}  
The pairwise Shapley--Taylor (or SII) interaction index~\citep{grabisch1999axiomatic, sundararajan2020many} follows as a weighted sum of the coefficients:
\begin{equation}
\Phi^{\mathrm{SII}}(\{i,j\};x) = \sum_{s=0}^{n-2} \beta_s(n,2)\, c^{(ij)}_s(x),
\qquad
\beta_s(n,2) = \frac{s!\,(n-2-s)!}{(n-1)!}.
\label{eq:pairwise-sii}
\end{equation}
The weights $\beta_s(n,2)$ ensure the correct combinatorial normalization across coalition sizes.

Algorithm~\ref{alg:pairwise-sii} implements this procedure end-to-end: it builds lifted inputs, evaluates the inclusion--exclusion probe at $n-1$ interpolation points, solves the Vandermonde system, and combines the coefficients to produce $\Phi^{\mathrm{SII}}(\{i,j\};x)$. This pairwise case illustrates the fundamental computational structure of our method and directly generalizes to higher-order feature interactions.
\begin{algorithm}[H]
\caption{TN-SHAP Pairwise Interaction \(\Phi^{\mathrm{SII}}(\{i,j\};x)\)}
\label{alg:pairwise-sii}
\small
\begin{algorithmic}[1]
\Require Instance \(x\in\R^n\); feature pair \((i,j)\); distinct probe points \(\{t_\ell\}_{\ell=0}^{n-2}\)
\Ensure Pairwise SII \(\Phi^{\mathrm{SII}}(\{i,j\};x)\)

\State \textbf{Lift once:} \(\tilde x_r \gets [\phi_r(x_r),1]\) for all \(r\in N\)
\For{\(\ell = 0 \ \textbf{to}\ n-2\)} \Comment degree \(\le n-2\) \(\Rightarrow\) \(n{-}1\) points
  \State \(\tilde x_{r\notin\{i,j\}}^{(\ell)} \gets S_r(t_\ell)\,\tilde x_r\)
  \State \(\tilde x_i^{(1)}\!\gets\!S_i(1)\tilde x_i,\quad \tilde x_i^{(0)}\!\gets\!S_i(0)\tilde x_i\)
  \State \(\tilde x_j^{(1)}\!\gets\!S_j(1)\tilde x_j,\quad \tilde x_j^{(0)}\!\gets\!S_j(0)\tilde x_j\)
  \State \(q_\ell \gets \ \ g(\dots,\tilde x_i^{(1)},\tilde x_j^{(1)},\dots)\)
         \(-\ g(\dots,\tilde x_i^{(1)},\tilde x_j^{(0)},\dots)\)
         \(-\ g(\dots,\tilde x_i^{(0)},\tilde x_j^{(1)},\dots)\)
         \(+\ g(\dots,\tilde x_i^{(0)},\tilde x_j^{(0)},\dots)\)
  \Comment 4 TN forwards
\EndFor
\State Build Vandermonde \(V\in\R^{(n-1)\times(n-1)}\), \(V_{\ell+1,s+1}=t_\ell^{\,s}\)
\State Solve \(V\,c^{(ij)}=q\) (e.g., QR) to obtain coefficients \(c^{(ij)}_s,\ s=0{:}n-2\)
\State \textbf{Combine:} \(\displaystyle \Phi^{\mathrm{SII}}(\{i,j\};x)=\sum_{s=0}^{n-2}\beta_s(n,2)\,c^{(ij)}_s\)
\State \Return \(\Phi^{\mathrm{SII}}(\{i,j\};x)\)
\end{algorithmic}
\textbf{Complexity:} \(4(n{-}1)\) TN forwards \(+\ O((n{-}1)^2)\) for the solve; each forward is \(O(\mathrm{poly}(\chi))\).
\end{algorithm}
\paragraph{Complexity.}  
The procedure requires $4(n-1)$ forward evaluations of the tensor network (since $2^2=4$ inclusion patterns) and $O((n-1)^2)$ time for solving the Vandermonde system. This cost is polynomial in $n$ and the bond dimension $\chi$, and it reveals the essential tradeoff that generalizes to the $2^k$ scaling for higher-order interactions.

\subsection{Higher-Order Feature Interactions}
\label{subsec:higher-order}
The same principles underlying the pairwise case extend naturally to any coalition $C \subseteq N$ of size $|C| = k$. Higher-order interactions quantify how the joint effect of a group of $k$ features differs from the sum of their contributions taken in smaller coalitions~\citep{grabisch1999axiomatic}, capturing nonlinear and synergistic dependencies that arise only when these features act together.

\paragraph{General inclusion--exclusion probe.}  
For an arbitrary coalition $C$ with $|C| = k$, the $k$-th order marginal contribution is defined as the alternating sum over all $2^k$ inclusion patterns:
\begin{equation}
Q_C(t;x) \;=\; 
\sum_{\mathbf{b} \in \{0,1\}^k} 
(-1)^{k - \|\mathbf{b}\|_1}\;
g\Big(
\{S_r(t)\tilde x_r\}_{r \notin C}, 
\{S_i(b_i)\tilde x_i\}_{i \in C}
\Big),
\label{eq:Q-general}
\end{equation}
where $\mathbf{b} = (b_i)_{i\in C}$ is a binary vector indicating inclusion ($b_i=1$) or exclusion ($b_i=0$) of each feature $i$ in $C$. The signs implement inclusion--exclusion over $C$, while the selectors $S_r(t)$ modulate the contribution of the remaining $n - k$ features. By multilinearity, $Q_C(t;x)$ is a univariate polynomial in $t$ of degree at most $n - k$:
\[
Q_C(t;x) = \sum_{s=0}^{n-k} c^{(C)}_s(x)\, t^s,
\]
where each coefficient $c^{(C)}_s(x)$ aggregates the contributions of all coalitions of size $s$ among the remaining features.

\paragraph{Interpolation and coefficient recovery.}  
Evaluating $Q_C(t;x)$ at $n-k+1$ distinct probe points $\{t_\ell\}_{\ell=0}^{n-k}\subset (0,1)$ gives the Vandermonde system
\[
V\,c^{(C)} = q, \qquad 
V_{\ell+1,s+1} = t_\ell^{\,s}, \quad 
q_\ell = Q_C(t_\ell;x),
\]
whose solution $c^{(C)} = (c^{(C)}_0, \dots, c^{(C)}_{n-k})^\top$ recovers all size-grouped contributions. 

\paragraph{Interaction index reconstruction.}  
The order-$k$ Shapley--Taylor (or SII) interaction index~\citep{grabisch1999axiomatic,sundararajan2020many} follows as a size-weighted linear combination of these coefficients:
\begin{equation}
\Phi^{\mathrm{SII}}(C;x) = 
\sum_{s=0}^{n-k} \beta_s(n,k)\, c^{(C)}_s(x),
\qquad 
\beta_s(n,k) = \frac{s!\,(n-k-s)!}{(n-k+1)!}.
\label{eq:general-sii}
\end{equation}
This expression exactly recovers the canonical multilinear extension of the SII under size-based weighting.
Algorithm~\ref{alg:kway-sii} implements the full procedure: for a target coalition $C$, it constructs the inclusion--exclusion probe~\eqref{eq:Q-general}, evaluates it at $n - k + 1$ probe points, solves the Vandermonde system, and combines the recovered coefficients via~\eqref{eq:general-sii}. This general routine reduces to the pairwise algorithm when $k = 2$.

\begin{algorithm}[H]
\caption{TN-SHAP \(k\)-Way Interaction \(\Phi^{\mathrm{SII}}(S;x)\) for \(S\subseteq N,\ |S|=k\)}
\label{alg:kway-sii}
\small
\begin{algorithmic}[1]
\Require Instance \(x\in\R^n\); target set \(S\subseteq N\) with \(|S|=k\); points \(\{t_\ell\}_{\ell=0}^{n-k}\)
\Ensure \(\Phi^{\mathrm{SII}}(S;x)\)
\State \textbf{Lift once:} \(\tilde x_r \gets [\phi_r(x_r),1]\) for all \(r\in N\)

\For{\(\ell = 0 \ \textbf{to}\ n-k\)} \Comment degree \(\le n-k\) \(\Rightarrow\) \(n{-}k{+}1\) points
  \State \(\tilde x_{r\notin S}^{(\ell)} \gets S_r(t_\ell)\,\tilde x_r\)
  \State \(q_\ell \gets \sum_{\mathbf{b}\in\{0,1\}^k} (-1)^{k-\|\mathbf{b}\|_1}\;
           g\big( \{\tilde x_{r\notin S}^{(\ell)}\}, \{S_{i}(b_i)\tilde x_i\}_{i\in S}\big)\)
  \Comment \(2^k\) TN forwards per node
\EndFor

\State Build Vandermonde \(V\in\R^{(n-k+1)\times(n-k+1)}\), \(V_{\ell+1,s+1}=t_\ell^{\,s}\)
\State Solve \(V\,c^{(S)}=q\) to obtain \(c^{(S)}_s,\ s=0{:}n-k\)
\State \textbf{Combine:} \(\displaystyle \Phi^{\mathrm{SII}}(S;x)=\sum_{s=0}^{n-k}\beta_s(n,k)\,c^{(S)}_s\)
\State \Return \(\Phi^{\mathrm{SII}}(S;x)\)
\end{algorithmic}
\textbf{Weights:} \(\displaystyle \beta_s(n,k)=\frac{s!\,(n-k-s)!}{(n-k+1)!}\) (standard SII normalization; adjust if a different convention is used).
\\
\textbf{Complexity:} \(2^k (n{-}k{+}1)\) TN forwards \(+\ O((n{-}k{+}1)^2)\) for the solve; each forward is \(O(\mathrm{poly}(\chi))\).
\end{algorithm}
\paragraph{Computational complexity.}  
\label{higher_order_complexity}
A naive implementation requires $2^k$ evaluations of $g$ for each probe point (one for each inclusion pattern) and $O((n-k+1)^2)$ time for solving the Vandermonde system. The overall complexity is therefore
\[
O\big(2^k (n - k + 1)\, \mathrm{poly}(\chi)\big) + O((n-k+1)^2) = O\big(2^k n\,\mathrm{poly}(\chi) + n^2\big),
\]
polynomial in $n$ and the bond dimension $\chi$, with exponential dependence on $k$ arising from the intrinsic combinatorial structure of the inclusion--exclusion expansion. The signed--toggle simplification described in \S\ref{subsec:signed-toggle} reduces this dependence to $O(1)$ per probe point while preserving the same interpolation-based recovery.

\subsection{Signed–Toggle Simplification}
\label{subsec:signed-toggle}
The inclusion–exclusion probe above naively requires $2^k$ evaluations per subset $S$.
When the surrogate $g$ is multilinear in the lifted inputs $\{\tilde x_i\}$ and the selectors act
linearly on those inputs, the exponential loop collapses to a single contraction.
By multilinearity, $g(\ldots,a_i{+}b_i,\ldots)=g(\ldots,a_i,\ldots)+g(\ldots,b_i,\ldots)$, hence
\[
g(\ldots,S_i(1)\tilde x_i,\ldots)-g(\ldots,S_i(0)\tilde x_i,\ldots)
= g\big(\ldots,\big(S_i(1){-}S_i(0)\big)\tilde x_i,\ldots\big).
\]
Applying this recursively for all $i\in S$ yields
\begin{equation}
\label{eq:signed-toggle}
\sum_{\mathbf{b}\in\{0,1\}^k}(-1)^{k-\|\mathbf{b}\|_1}\,
g\big(\ldots,\{S_\ell(b_\ell)\tilde x_\ell\}_{\ell\in S},\ldots\big)
\;=\;
g\big(\ldots,\{(S_\ell(1){-}S_\ell(0))\tilde x_\ell\}_{\ell\in S},\ldots\big).
\end{equation}
For the diagonal selectors $S_i(t)=\Diag(t\,I_{d_i-1},\,1)$ used here,
\[
S_i(1)-S_i(0)=\Diag(I_{d_i-1},\,0),
\]
which zeroes the bias channel and keeps only the data channels. Thus each $(S_i(1)-S_i(0))$ acts as a
discrete derivative / projection operator on feature $i$.

\paragraph{Collapsed probe and complexity.}
With Equation~\ref{eq:signed-toggle}, the probe for a subset $S$ becomes
\[
Q_S(t;x)\;=\;g\big(\{S_r(t)\tilde x_r\}_{r\notin S},\ \{(S_i(1)-S_i(0))\tilde x_i\}_{i\in S}\big),
\]
reducing the per–probe-point cost from $2^k$ forwards to $1$ forward. Evaluating at $n{-}k{+}1$ points and
solving the Vandermonde system (degree $\le n{-}k$) gives the overall complexity
\[
O\big((n{-}k{+}1)\,\mathrm{poly}(\chi)\big) + O\big((n{-}k{+}1)^2\big)
= O\big(n\,\mathrm{poly}(\chi)+n^2\big).
\]
The interpolation step and the size–based reconstruction
$\Phi^{\mathrm{SII}}(S;x)=\sum_{s=0}^{n-k}\beta_s(n,k)\,c_s^{(S)}(x)$ are unchanged.

\section{Comprehensive Experimental Evaluation}
\label{app: experiment}
We provide full experimental protocols, extended baselines comparisons, and additional ablation studies beyond those in the main paper. This includes detailed hyperparameter settings, convergence criteria, cohort selection procedures, and the complete teacher-student rank sweep analysis. We also present comprehensive runtime-accuracy trade-offs across all UCI benchmarks and analyze the impact of feature map dimensionality on higher-order interaction recovery.
\subsection{Experimental Setup and Protocols}
\label{app:exp-setup}

\paragraph{Hardware and Environment.}
All experiments were conducted on NVIDIA Tesla V100-SXM2-32GB GPUs with CUDA acceleration. We used PyTorch~\citep{paszke2019pytorch} with automatic mixed precision for training and inference. For reproducibility, we fixed random seeds (seed=42 for synthetic experiments, seed=2711 for UCI benchmarks) and recorded complete hardware specifications for each run. The primary compute nodes were Intel Xeon E5-2698 v4 @ 2.20GHz (503GB RAM).

\paragraph{Datasets.}
Table~\ref{tab:dataset_characteristics} summarizes the three UCI~\citep{uci_diabetes,uci_concrete,uci_energy} regression tasks used in our experiments. All features and targets were standardized with scikit-learn’s \texttt{StandardScaler} \citep{pedregosa2011scikit}. For local explanations, we selected cohorts of 100 neighbors via k-NN in standardized feature space around each test instance.

\begin{table}[H]
\centering
\small
\caption{Dataset characteristics used in our experiments.}
\label{tab:dataset_characteristics}
\begin{tabular}{lcccc}
\toprule
Dataset & Task & \# Samples & \# Features & Target \\
\midrule
Diabetes & Regression & 442 & 10 & Disease progression \\
Concrete & Regression & 1{,}030 & 8 & Compressive strength \\
Energy (Y1) & Regression & 768 & 8 & Heating load \\
\bottomrule
\end{tabular}
\end{table}

\paragraph{Teacher Models.}
We trained 3-layer MLPs~\citep{rumelhart1986learning, paszke2019pytorch} with architecture [input → 256 → 256 → 128 → 1] using ReLU activations. Teachers were trained until validation $R^2 \geq 0.95$ or 500 epochs, using Adam optimizer with learning rate $10^{-3}$ and early stopping (patience=50).

\paragraph{TN Surrogate Configuration.}
Each surrogate is modeled as a binary tensor tree~\citep{grasedyck2010hierarchical} with bond dimension $\chi=16$. The feature maps are learned through single-hidden-layer MLPs with 64 ReLU~\citep{nair2010rectified} units, producing scalar outputs that are concatenated with a bias term. For every test instance, the training data consist of two parts: a Gaussian neighborhood—using either $M=100$ or the number of test samples, whichever is smaller, with standard deviation $\sigma=0.1\times\text{std}(X_{\text{train}})$—and $2n^2$ structured selector-weighted probes placed at Chebyshev–Gauss nodes~\citep{trefethen2013approximation}. Optimization is performed using Adam with learning rate $10^{-3}$ for up to 1,500 epochs, applying early stopping based on validation $R^2$. A single surrogate is trained for each 100-point cohort to amortize computation across similar instances.

\paragraph{Baseline Methods.}
We benchmarked \methodname{} against established Shapley and interaction estimators under matched query budgets. All baselines were implemented using the \textsc{SHAP-IQ} library \citep{shapiq}, and we included all baseline methods available in that framework: KernelSHAPIQ \citep{fumagalli2024kernelshap}, a regression-based variant of the Shapley Interaction Index (SII); PermutationSampling \citep{tsai2023faith}, a traditional mean-estimation approach; RegressionFSII \citep{tsai2023faith}, which performs feature selection via regression; and SHAPIQ Monte Carlo \citep{shapiq}, a Monte Carlo estimator of SII values. All methods were evaluated at budgets $B \in \{50, 100, 500, 1000, 2000, 10000\}$ model evaluations.

\paragraph{Evaluation Protocol.}
For each test instance: (1) compute exact Shapley values/interactions via exhaustive enumeration (ground truth), (2) apply TN-SHAP using the fitted local surrogate, (3) run all baselines with specified budgets, (4) measure cosine similarity and MSE relative to ground truth, (5) record wall-clock time with CUDA synchronization. Results were aggregated across test sets with mean ± std reported.
We evaluated TN-SHAP on three UCI regression benchmarks, comparing against sampling-based baselines across interaction orders $k \in \{1,2,3\}$. Tables~\ref{tab:diabetes_k1_noR2}--\ref{tab:diabetes_k3_noR2} present comprehensive results for the Diabetes dataset, showing that TN-SHAP achieves comparable or superior accuracy while operating at millisecond timescales.

\subsection{Synthetic Validation Experiments}
\label{app:teacher-student-rank-sweep}

We evaluate how well tensor network (TN) students of varying ranks recover Shapley values and higher-order interactions from known target functions. Each teacher defines a multilinear mapping \(f:\mathbb{R}^4 \to \mathbb{R}\), and students are trained to fit \(f\) and reproduce its Shapley decomposition up to order \(k=3\) using our \methodname\ interpolation procedure (thin-diagonal paths, degree-\(d\) fits from \(d{+}1\) Chebyshev nodes, and inclusion--exclusion for \(k\!\ge\!2\)).

We consider three teacher targets of increasing complexity:
\begin{itemize}[leftmargin=*,itemsep=2pt,topsep=2pt]
\item \text{TensorTree (rank 3):} Balanced binary TN with physical dimensions \([2,2,2,2]\).
\item \text{TensorTree (rank 16):} Higher-capacity variant.
\item \text{Generic Multilinear:} Explicit multilinear function up to order~3 with sparsity \(0.3\).
\end{itemize}

Student TN ranks are \(\{2,4,6,8,10,16\}\).
For each teacher--rank pair we train on \(10{,}000\) i.i.d.\ inputs \(x\!\sim\!\mathcal{N}(0,I_4)\) for up to \(1{,}500\) epochs using Adam (learning rate \(10^{-3}\)) with a ReduceLROnPlateau scheduler; early stopping employs patience \(=200\) with stringent loss/\(R^2\) thresholds.
We run \(\mathbf{10}\) seeds per configuration; seeds are drawn deterministically from an RNG initialized with \texttt{12345}.

Immediately after training each student, we compute TNShap interactions for orders \(k\in\{1,2,3\}\) on \(\mathbf{128}\) random test inputs and report an \emph{aggregated} \(R^2\) by stacking all interaction values across inputs (one score per order).
We summarize results as mean \(\pm\) std across the 10 seeds.

Experiments ran on a Tesla V100\textendash SXM2\textendash 32GB GPU; CUDA acceleration was enabled for training and batched evaluations.

\begin{table}[ht]
\centering
\caption{Aggregated Teacher--Student Rank Sweep ($R^2$ mean $\pm$ std across 10 seeds, 128 test points). 
Results shown for TNShap with TensorTree and Generic Multilinear teachers.}
\label{tab:teacher_student_rank_sweep}
\resizebox{\textwidth}{!}{
\begin{tabular}{llcccccc}
\toprule
Teacher & Student Rank & Train $R^2$ & Order-1 $R^2$ & Order-2 $R^2$ & Order-3 $R^2$ \\
\midrule
\textbf{TensorTree Rank 3} & 2  & 0.8068 ± 0.3048 & 0.7853 ± 0.3503 & 0.8652 ± 0.2797 & 0.8428 ± 0.4298 \\
                           & 4  & 0.9970 ± 0.0040 & 0.9948 ± 0.0056 & 0.9994 ± 0.0008 & 0.9970 ± 0.0057 \\
                           & 6  & 0.9999 ± 0.0000 & 0.9992 ± 0.0003 & 1.0000 ± 0.0000 & 0.9993 ± 0.0003 \\
                           & 8  & 1.0000 ± 0.0000 & 0.9988 ± 0.0007 & 1.0000 ± 0.0000 & 0.9993 ± 0.0004 \\
                           & 10 & 1.0000 ± 0.0000 & 0.9992 ± 0.0006 & 1.0000 ± 0.0000 & 0.9994 ± 0.0004 \\
                           & 16 & 1.0000 ± 0.0000 & 0.9996 ± 0.0002 & 1.0000 ± 0.0000 & 0.9997 ± 0.0001 \\
\midrule
\textbf{TensorTree Rank 16} & 2  & 0.7863 ± 0.2833 & 0.8034 ± 0.3037 & 0.7920 ± 0.2856 & 0.8726 ± 0.3516 \\
                            & 4  & 0.9978 ± 0.0056 & 0.9947 ± 0.0127 & 0.9990 ± 0.0028 & 0.9996 ± 0.0003 \\
                            & 6  & 1.0000 ± 0.0000 & 0.9994 ± 0.0002 & 1.0000 ± 0.0000 & 0.9998 ± 0.0001 \\
                            & 8  & 1.0000 ± 0.0000 & 0.9993 ± 0.0002 & 1.0000 ± 0.0000 & 0.9998 ± 0.0000 \\
                            & 10 & 1.0000 ± 0.0000 & 0.9994 ± 0.0004 & 1.0000 ± 0.0000 & 0.9998 ± 0.0001 \\
                            & 16 & 1.0000 ± 0.0000 & 0.9996 ± 0.0001 & 1.0000 ± 0.0000 & 0.9999 ± 0.0000 \\
\midrule
\textbf{Generic Multilinear} & 2  & 0.8751 ± 0.1457 & --5.5812 ± 9.6013 & 0.7849 ± 0.2398 & --0.0658 ± 1.9349 \\
                             & 4  & 0.9996 ± 0.0008 & 0.9118 ± 0.0539 & 0.9991 ± 0.0019 & 0.9821 ± 0.0109 \\
                             & 6  & 1.0000 ± 0.0000 & 0.9410 ± 0.0092 & 1.0000 ± 0.0001 & 0.9861 ± 0.0023 \\
                             & 8  & 1.0000 ± 0.0000 & 0.9396 ± 0.0116 & 0.9999 ± 0.0001 & 0.9829 ± 0.0039 \\
                             & 10 & 1.0000 ± 0.0000 & 0.9430 ± 0.0091 & 1.0000 ± 0.0001 & 0.9832 ± 0.0031 \\
                             & 16 & 1.0000 ± 0.0000 & 0.9514 ± 0.0053 & 1.0000 ± 0.0001 & 0.9854 ± 0.0014 \\
\bottomrule
\end{tabular}
}
\end{table}

Table~\ref{tab:teacher_student_rank_sweep} reports the aggregated $R^2$ between teacher and student TNShap interaction tensors across ten random seeds. 
For both TensorTree teachers, performance improved rapidly with rank: 
low-rank students (rank~2) achieved moderate $R^2$ ($\approx0.8$) across orders, while rank~4--6 students nearly matched the teachers ($R^2>0.99$). 
Beyond rank~6, all interaction orders reached near-perfect agreement ($R^2>0.999$), confirming that the student TNs can reliably recover the underlying Shapley structure once expressive capacity suffices.
For the Generic~Multilinear teacher, rank~2 students underfit, yielding unstable or even negative $R^2$, but ranks~$\ge4$ stabilized ($R^2_{k=1}\!\approx\!0.94$, $R^2_{k=3}\!\approx\!0.98$), demonstrating that modest tensor ranks can already capture complex multilinear dependencies.
Overall, these results validate that the proposed \methodname{} approach faithfully recovers first-, second-, and third-order Shapley interactions when the student TN rank matches the intrinsic multilinearity of the target.

We performed a focused rerun for the \emph{Generic Multilinear} teacher with a rank-2 student across \textbf{30} random seeds, reporting aggregated $R^2$ (stacked across test inputs) as mean~$\pm$~std:
\[
\text{Train }R^2 = 0.9592 \pm 0.0792,\qquad
R^2_{k=1} = -0.4547 \pm 2.4141,\qquad
R^2_{k=2} = 0.9245 \pm 0.1717,\qquad
R^2_{k=3} = 0.8186 \pm 0.4037.
\]
The negative and highly variable $R^2$ for $k{=}1$ is \emph{expected} in this setting and reflects metric instability rather than a failure of TNShap. In the Generic Multilinear teacher, most of the signal resides in higher-order terms; consequently, the \emph{first-order} Shapley values $\{\phi_i\}$ have \emph{very small variance} across features and test points. Since $R^2 \!=\! 1 - \mathrm{MSE}/\mathrm{Var}(y)$, a near-zero $\mathrm{Var}(\phi_i)$ makes $R^2$ ill-conditioned: even small absolute errors inflate the ratio and can yield large negative values. In contrast, the \emph{pairwise} and \emph{three-way} interactions ($k{=}2,3$) exhibit larger variance and are therefore stably recovered by the student (high positive $R^2$). Practically, this indicates that with rank~2 the model underfits main effects while still capturing a substantial portion of higher-order structure.
Figure~\ref{fig:heatmap_gt14} visualizes the safe-$R^2$ scores for different student ranks when the ground-truth tensor network has rank~14.
Each cell corresponds to the reconstruction quality of order-$k$ interactions for a given student rank.
Performance improves monotonically with rank, saturating near~1.0 once the student rank exceeds the ground-truth rank.
Lower-rank students underestimate higher-order interactions ($k\!\ge\!3$), confirming that sufficient tensor rank is crucial for capturing complex coalitional structures.
\begin{figure}[H]
    \centering
    \includegraphics[width=0.6\linewidth]{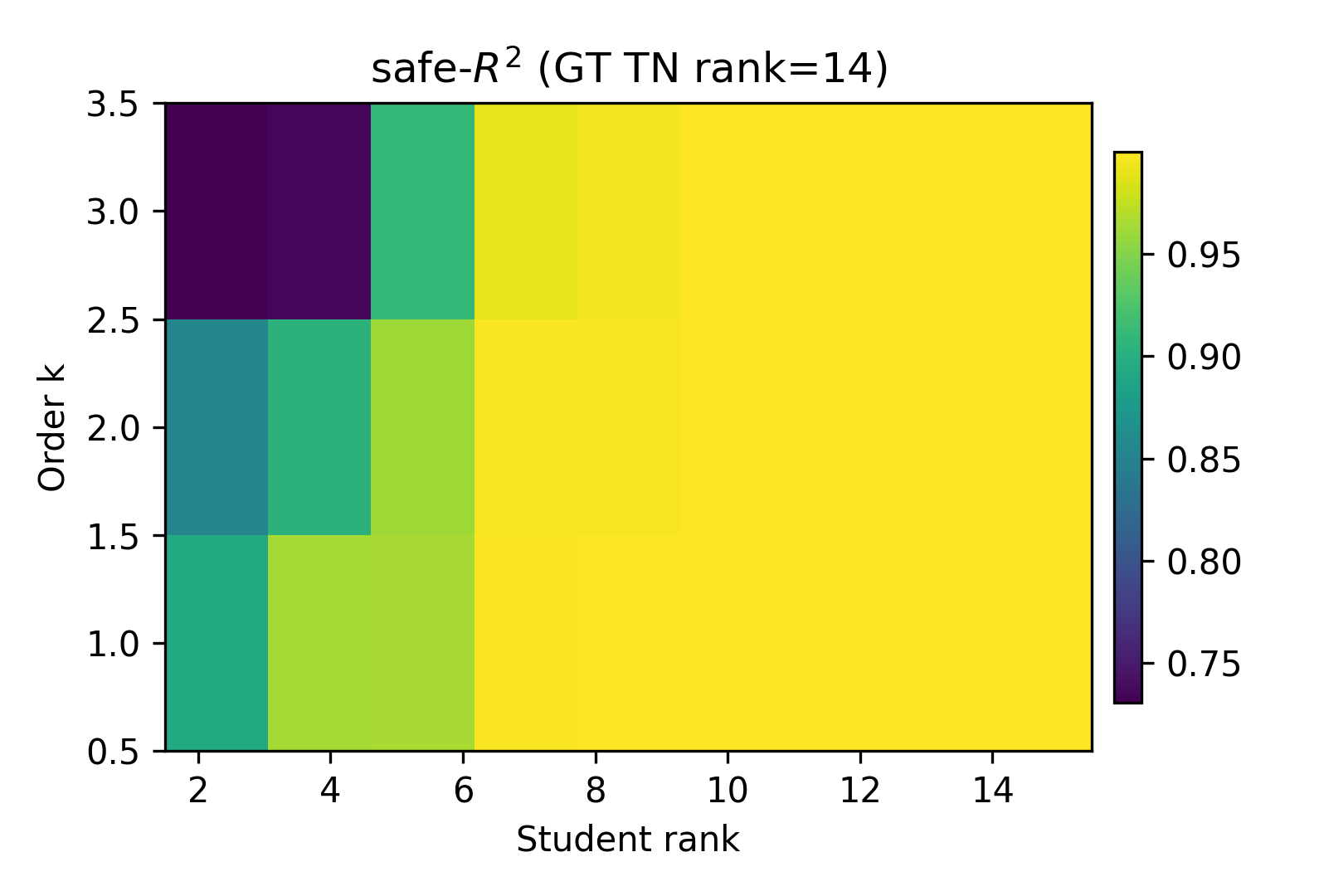}
    \caption{Rank ablation heatmap: safe-$R^2$ for student ranks under a ground-truth tensor network of rank~14.}
    \label{fig:heatmap_gt14}
\end{figure}
\paragraph{Scaling Experiments.}
For dimensions $d \in \{10,20,30,40,50\}$, we generated CP-decomposed multilinear functions and fitted rank-16 tensor tree students. Each configuration used 10 test points with exact ground truth for runtime comparison against KernelSHAP-IQ. TN-SHAP achieved consistent millisecond-scale attribution across all dimensions, with runtime scaling approximately linearly in $d$.

\subsection{UCI Benchmark Extended Results}
We benchmark \methodname on the UCI \textsc{Diabetes} regression task against standard Shapley/interaction estimators  KernelSHAPIQ (Regression SII)~\citep{fumagalli2024kernelshap}, RegressionFSII~\citep{tsai2023faith}, and PermutationSamplingSII~\citep{tsai2023faith} , computing first-, second-, and third-order values under matched budgets and identical input preprocessing. The TN surrogate is a feature-mapped tensor network trained with GPU-accelerated contractions, enabling fast estimation of contributions and interactions while keeping the evaluation protocol identical across methods. We follow the experiment set up in Section~\ref{app:exp-setup}.
The results in Tables~\ref{tab:diabetes_k1_noR2}--\ref{tab:diabetes_k3_noR2} compare the performance of \methodname{} with several baseline Shapley estimation methods across increasing coalition orders $k=1,2,3$ on the \texttt{Diabetes} dataset, $(+0.0654)$ denotes the amortized training time (in seconds per test point) added to the per-point evaluation time for TN-SHAP. All other methods report per-point evaluation time only. For $k=1$, \methodname{} achieves near-perfect cosine similarity ($0.9937\pm0.0060$) and extremely low mean-squared error ($4.9\times10^{-4}$), outperforming all other baselines in accuracy and almost equal runtime to very low budget sampling methods (0.07~s versus 0.06--12~s). This confirms that the tensor network surrogate captures first-order (individual-feature) contributions efficiently and faithfully. 
At $k=2$, the approximation difficulty increases: while \methodname{} maintains a low MSE ($3.8\times10^{-4}$) and fast inference, its cosine similarity ($0.64\pm0.19$) reflects the inherent challenge of reconstructing pairwise interactions with a compact multilinear model. Nevertheless, \methodname{} performs comparably or better than regression- and sampling-based methods, which require orders of magnitude more runtime. 
For $k=3$, higher-order interaction estimation becomes unstable for all baselines, cosine similarity drops below~0.3 and MSE values remain small but noisy, yet \methodname{} still retains competitive accuracy with the lowest computational cost. Overall, these results demonstrate that the tensor network approach provides an excellent trade-off between efficiency and fidelity, particularly at low and moderate interaction orders, validating its scalability as a surrogate for computing coalitional Shapley indices.

\begin{table}[t]
\centering
\caption{Baseline Performance (Diabetes) — Order $k=1$}
\label{tab:diabetes_k1_noR2}
\resizebox{\textwidth}{!}{
\begin{tabular}{llccccc}
\toprule
Baseline & Order & Budget & Targets & Time (s) & Cosine Sim. & MSE \\
\midrule
\textbf{TN-SHAP} & $k=1$ & -- & 89 &\textbf{0.0069 ± 0.0323} (+0.0654) & 0.9937 ± 0.0060 & 0.000499 ± 0.000397 \\
\midrule

KernelSHAPIQ & $k=1$ & 50.0 & 89 & 0.0628 ± 0.0102 & 0.9827 ± 0.0109 & 0.001159 ± 0.000435 \\
KernelSHAPIQ & $k=1$ & 100.0 & 89 & 0.0898 ± 0.0177 & 0.9847 ± 0.0125 & 0.001124 ± 0.000436 \\
KernelSHAPIQ & $k=1$ & 500.0 & 89 & 0.2846 ± 0.0523 & 0.9910 ± 0.0086 & 0.000675 ± 0.000329 \\
KernelSHAPIQ & $k=1$ & 1000.0 & 89 & 0.4787 ± 0.0194 & 0.9863 ± 0.0114 & 0.001011 ± 0.000388 \\
KernelSHAPIQ & $k=1$ & 2000.0 & 89 & 0.4550 ± 0.0267 & 0.9923 ± 0.0064 & 0.000546 ± 0.000236 \\
KernelSHAPIQ & $k=1$ & 10000.0 & 89 & 0.4598 ± 0.0454 & 0.9921 ± 0.0071 & 0.000585 ± 0.000274 \\
\midrule
PermutationSamp & $k=1$ & 50.0 & 89 & 0.1103 ± 0.0135 & 0.9897 ± 0.0082 & 0.000715 ± 0.000307 \\
PermutationSamp & $k=1$ & 100.0 & 89 & 0.1653 ± 0.0341 & 0.9853 ± 0.0115 & 0.001143 ± 0.000423 \\
PermutationSamp & $k=1$ & 500.0 & 89 & 0.6640 ± 0.0935 & 0.9913 ± 0.0082 & 0.000687 ± 0.000355 \\
PermutationSamp & $k=1$ & 1000.0 & 89 & 1.2574 ± 0.1079 & 0.9857 ± 0.0118 & 0.001042 ± 0.000401 \\
PermutationSamp & $k=1$ & 2000.0 & 89 & 2.5091 ± 0.2002 & 0.9921 ± 0.0065 & 0.000546 ± 0.000222 \\
PermutationSamp & $k=1$ & 10000.0 & 89 & 12.2420 ± 0.7666 & 0.9921 ± 0.0071 & 0.000585 ± 0.000272 \\
\midrule
RegressionFSII & $k=1$ & 50.0 & 89 & 0.0606 ± 0.0069 & 0.9827 ± 0.0109 & 0.001159 ± 0.000435 \\
RegressionFSII & $k=1$ & 100.0 & 89 & 0.0875 ± 0.0111 & 0.9847 ± 0.0125 & 0.001124 ± 0.000436 \\
RegressionFSII & $k=1$ & 500.0 & 89 & 0.2793 ± 0.0289 & 0.9910 ± 0.0086 & 0.000675 ± 0.000329 \\
RegressionFSII & $k=1$ & 1000.0 & 89 & 0.4776 ± 0.0167 & 0.9863 ± 0.0114 & 0.001011 ± 0.000388 \\
RegressionFSII & $k=1$ & 2000.0 & 89 & 0.4536 ± 0.0149 & 0.9923 ± 0.0064 & 0.000546 ± 0.000236 \\
RegressionFSII & $k=1$ & 10000.0 & 89 & 0.4567 ± 0.0440 & 0.9921 ± 0.0071 & 0.000585 ± 0.000274 \\
\midrule
SHAPIQ (MonteCarlo SII) & $k=1$ & 50.0 & 89 & 0.0610 ± 0.0059 & 0.8446 ± 0.0893 & 0.013974 ± 0.013453 \\
SHAPIQ (MonteCarlo SII) & $k=1$ & 100.0 & 89 & 0.0876 ± 0.0069 & 0.9120 ± 0.0684 & 0.008089 ± 0.007760 \\
SHAPIQ (MonteCarlo SII) & $k=1$ & 500.0 & 89 & 0.2751 ± 0.0265 & 0.9863 ± 0.0103 & 0.000989 ± 0.000387 \\
SHAPIQ (MonteCarlo SII) & $k=1$ & 1000.0 & 89 & 0.4850 ± 0.0339 & 0.9864 ± 0.0112 & 0.001011 ± 0.000398 \\
SHAPIQ (MonteCarlo SII) & $k=1$ & 2000.0 & 89 & 0.4575 ± 0.0276 & 0.9923 ± 0.0064 & 0.000546 ± 0.000236 \\
SHAPIQ (MonteCarlo SII) & $k=1$ & 10000.0 & 89 & 0.4616 ± 0.0703 & 0.9921 ± 0.0071 & 0.000585 ± 0.000274 \\
\bottomrule
\end{tabular}
}
\end{table}

\begin{table}[t]
\centering
\caption{Baseline Performance (Diabetes) — Order $k=2$}
\label{tab:diabetes_k2_noR2}
\resizebox{\textwidth}{!}{
\begin{tabular}{llccccc}
\toprule
Baseline & Order & Budget & Targets & Time (s) & Cosine Sim. & MSE \\
\midrule
\textbf{TN-SHAP} & $k=2$ & -- & 89 & \textbf{0.0058 ± 0.0097} (+0.0654)  & 0.6435 ± 0.1894 & 0.000377 ± 0.000339 \\
\midrule
KernelSHAPIQ & $k=2$ & 50.0 & 89 & 0.0599 ± 0.0086 & 0.0909 ± 0.1035 & 0.126899 ± 0.123901 \\
KernelSHAPIQ & $k=2$ & 100.0 & 89 & 0.0891 ± 0.0136 & 0.4129 ± 0.1538 & 0.001030 ± 0.000212 \\
KernelSHAPIQ & $k=2$ & 500.0 & 89 & 0.2772 ± 0.0263 & 0.6505 ± 0.1831 & 0.000305 ± 0.000120 \\
KernelSHAPIQ & $k=2$ & 1000.0 & 89 & 0.4834 ± 0.0184 & 0.6462 ± 0.1761 & 0.000318 ± 0.000124 \\
KernelSHAPIQ & $k=2$ & 2000.0 & 89 & 0.4621 ± 0.0418 & 0.7204 ± 0.1681 & 0.000241 ± 0.000119 \\
KernelSHAPIQ & $k=2$ & 10000.0 & 89 & 0.4656 ± 0.0761 & 0.6787 ± 0.1834 & 0.000277 ± 0.000128 \\
\midrule
PermutationSamp & $k=2$ & 50.0 & 89 & 0.0265 ± 0.0039 & 0.0000 ± 0.0000 & 0.000595 ± 0.000436 \\
PermutationSamp & $k=2$ & 100.0 & 89 & 0.0857 ± 0.0141 & 0.1560 ± 0.0966 & 0.000630 ± 0.000376 \\
PermutationSamp & $k=2$ & 500.0 & 89 & 0.2996 ± 0.0375 & 0.4324 ± 0.1572 & 0.000604 ± 0.000232 \\
PermutationSamp & $k=2$ & 1000.0 & 89 & 0.5874 ± 0.0487 & 0.4694 ± 0.1502 & 0.000811 ± 0.000161 \\
PermutationSamp & $k=2$ & 2000.0 & 89 & 1.1447 ± 0.1233 & 0.6941 ± 0.1509 & 0.000273 ± 0.000130 \\
PermutationSamp & $k=2$ & 10000.0 & 89 & 5.6492 ± 0.3430 & 0.6683 ± 0.1823 & 0.000286 ± 0.000122 \\
\midrule
RegressionFSII & $k=2$ & 50.0 & 89 & 0.0585 ± 0.0040 & 0.0785 ± 0.0959 & 0.233139 ± 0.453246 \\
RegressionFSII & $k=2$ & 100.0 & 89 & 0.0867 ± 0.0061 & 0.5747 ± 0.1642 & 0.000443 ± 0.000150 \\
RegressionFSII & $k=2$ & 500.0 & 89 & 0.2765 ± 0.0160 & 0.6923 ± 0.1828 & 0.000263 ± 0.000127 \\
RegressionFSII & $k=2$ & 1000.0 & 89 & 0.4774 ± 0.0196 & 0.6892 ± 0.1814 & 0.000268 ± 0.000128 \\
RegressionFSII & $k=2$ & 2000.0 & 89 & 0.4549 ± 0.0232 & 0.7342 ± 0.1687 & 0.000230 ± 0.000121 \\
RegressionFSII & $k=2$ & 10000.0 & 89 & 0.4554 ± 0.0225 & 0.6916 ± 0.1866 & 0.000264 ± 0.000133 \\
\midrule
SHAPIQ (MonteCarlo SII) & $k=2$ & 50.0 & 89 & 0.0619 ± 0.0107 & 0.0369 ± 0.1007 & 0.131029 ± 0.135230 \\
SHAPIQ (MonteCarlo SII) & $k=2$ & 100.0 & 89 & 0.0911 ± 0.0152 & 0.0624 ± 0.1233 & 0.043445 ± 0.039803 \\
SHAPIQ (MonteCarlo SII) & $k=2$ & 500.0 & 89 & 0.2819 ± 0.0403 & 0.2649 ± 0.1200 & 0.002771 ± 0.002456 \\
SHAPIQ (MonteCarlo SII) & $k=2$ & 1000.0 & 89 & 0.4899 ± 0.0220 & 0.6123 ± 0.1598 & 0.000365 ± 0.000139 \\
SHAPIQ (MonteCarlo SII) & $k=2$ & 2000.0 & 89 & 0.4621 ± 0.0150 & 0.7204 ± 0.1681 & 0.000241 ± 0.000119 \\
SHAPIQ (MonteCarlo SII) & $k=2$ & 10000.0 & 89 & 0.4619 ± 0.0207 & 0.6787 ± 0.1834 & 0.000277 ± 0.000128 \\
\bottomrule
\end{tabular}
}
\end{table}

\begin{table}[t]
\centering
\caption{Baseline Performance (Diabetes) — Order $k=3$}
\label{tab:diabetes_k3_noR2}
\resizebox{\textwidth}{!}{
\begin{tabular}{llccccc}
\toprule
Baseline & Order & Budget & Targets & Time (s) & Cosine Sim. & MSE \\
\midrule
\textbf{TN-SHAP} & $k=3$ & -- & 89 & \textbf{0.0052 ± 0.0002} (+0.0654) & 0.1433 ± 0.4013 & 0.000184 ± 0.000240 \\
\midrule
KernelSHAPIQ & $k=3$ & 50.0 & 89 & 0.0964 ± 0.0099 & -0.0060 ± 0.0731 & 0.000186 ± 0.000131 \\
KernelSHAPIQ & $k=3$ & 100.0 & 89 & 0.1645 ± 0.0319 & 0.0568 ± 0.0929 & 0.000248 ± 0.000101 \\
KernelSHAPIQ & $k=3$ & 500.0 & 89 & 0.3584 ± 0.0139 & 0.1476 ± 0.1919 & 0.000378 ± 0.000075 \\
KernelSHAPIQ & $k=3$ & 1000.0 & 89 & 0.5719 ± 0.0827 & 0.2009 ± 0.2349 & 0.000212 ± 0.000066 \\
KernelSHAPIQ & $k=3$ & 2000.0 & 89 & 0.5491 ± 0.0693 & 0.2730 ± 0.3015 & 0.000127 ± 0.000064 \\
KernelSHAPIQ & $k=3$ & 10000.0 & 89 & 0.5436 ± 0.0339 & 0.2252 ± 0.2796 & 0.000150 ± 0.000070 \\
\midrule
PermutationSamp & $k=3$ & 50.0 & 89 & 0.0264 ± 0.0038 & 0.0000 ± 0.0000 & 0.000137 ± 0.000120 \\
PermutationSamp & $k=3$ & 100.0 & 89 & 0.0265 ± 0.0042 & 0.0000 ± 0.0000 & 0.000137 ± 0.000120 \\
PermutationSamp & $k=3$ & 500.0 & 89 & 0.2706 ± 0.0470 & 0.0693 ± 0.0799 & 0.000224 ± 0.000117 \\
PermutationSamp & $k=3$ & 1000.0 & 89 & 0.5102 ± 0.0702 & 0.0786 ± 0.1185 & 0.000386 ± 0.000103 \\
PermutationSamp & $k=3$ & 2000.0 & 89 & 0.9893 ± 0.1018 & 0.1530 ± 0.1659 & 0.000185 ± 0.000101 \\
PermutationSamp & $k=3$ & 10000.0 & 89 & 4.7880 ± 0.2098 & 0.1706 ± 0.2349 & 0.000209 ± 0.000076 \\
\midrule
RegressionFSII & $k=3$ & 50.0 & 89 & 0.0587 ± 0.0038 & 0.0080 ± 0.0496 & 0.249489 ± 0.376970 \\
RegressionFSII & $k=3$ & 100.0 & 89 & 0.0917 ± 0.0069 & -0.0040 ± 0.0579 & 0.887906 ± 0.275777 \\
RegressionFSII & $k=3$ & 500.0 & 89 & 0.2769 ± 0.0266 & 0.2364 ± 0.2925 & 0.000142 ± 0.000068 \\
RegressionFSII & $k=3$ & 1000.0 & 89 & 0.4805 ± 0.0153 & 0.2579 ± 0.3101 & 0.000133 ± 0.000067 \\
RegressionFSII & $k=3$ & 2000.0 & 89 & 0.4558 ± 0.0135 & 0.3096 ± 0.3361 & 0.000115 ± 0.000063 \\
RegressionFSII & $k=3$ & 10000.0 & 89 & 0.4641 ± 0.0354 & 0.2584 ± 0.3354 & 0.000128 ± 0.000071 \\
\midrule
SHAPIQ (MonteCarlo SII) & $k=3$ & 50.0 & 89 & 0.0669 ± 0.0074 & 0.0059 ± 0.0578 & 0.051156 ± 0.053770 \\
SHAPIQ (MonteCarlo SII) & $k=3$ & 100.0 & 89 & 0.1011 ± 0.0177 & 0.0114 ± 0.0544 & 0.010474 ± 0.010887 \\
SHAPIQ (MonteCarlo SII) & $k=3$ & 500.0 & 89 & 0.2937 ± 0.0439 & 0.0183 ± 0.0473 & 0.017592 ± 0.016229 \\
SHAPIQ (MonteCarlo SII) & $k=3$ & 1000.0 & 89 & 0.5078 ± 0.0225 & 0.1013 ± 0.1486 & 0.000557 ± 0.000361 \\
SHAPIQ (MonteCarlo SII) & $k=3$ & 2000.0 & 89 & 0.4818 ± 0.0150 & 0.2730 ± 0.3015 & 0.000127 ± 0.000064 \\
SHAPIQ (MonteCarlo SII) & $k=3$ & 10000.0 & 89 & 0.4832 ± 0.0344 & 0.2252 ± 0.2796 & 0.000150 ± 0.000070 \\
\bottomrule
\end{tabular}
}
\end{table}
\subsection{Feature Map Dimensionality Study}
\label{app: feature_map_exps}
We study how the dimensionality of the per--feature nonlinear map
$\psi : \mathbb{R} \to \mathbb{R}^{m_{\text{fmap}}}$ 
affects the quality of the learned multilinear surrogate.
Each feature is first passed through a small MLP
producing $m_{\text{fmap}}$ output channels
with $\psi(0)=0$, and the resulting vector
$\bigl[\psi(x_i),1\bigr]$ forms the physical leg
of each tensor--network (TN) leaf.
We vary $m_{\text{fmap}} \in \{1,2,4,8\}$,
train a TN--based student on the dataset augmented with sampled interpolation points, generated from the \texttt{diabetes} regression benchmark,
and evaluate its recovered interaction tensors
against an exact teacher model for orders $k=1,2,3$ using cosine similarity and mean--squared error (MSE).
\begin{table}[H]
\centering
\small
\setlength{\tabcolsep}{6pt}
\renewcommand{\arraystretch}{1.15}
\begin{tabular}{ccccccc}
\toprule
\multirow{2}{*}{$m_{\text{fmap}}$} &
\multicolumn{2}{c}{Order $k=1$} &
\multicolumn{2}{c}{Order $k=2$} &
\multicolumn{2}{c}{Order $k=3$} \\
\cmidrule(lr){2-3} \cmidrule(lr){4-5} \cmidrule(lr){6-7}
  & cos & MSE ($\times 10^{-4}$) & cos & MSE ($\times 10^{-4}$) & cos & MSE ($\times 10^{-4}$) \\ \midrule
1 & 0.995 & 4.24 & 0.667 & 3.55 & 0.141 & 2.06 \\
2 & 0.973 & 21.7 & 0.525 & 4.39 & 0.130 & 1.75 \\
4 & 0.997 & 2.37 & 0.789 & 2.82 & 0.303 & 1.74 \\
8 & 0.994 & 6.22 & 0.663 & 3.50 & 0.212 & 1.72 \\
\bottomrule
\end{tabular}
\caption{TN selector vs. Teacher on \texttt{diabetes}: cosine similarity and mean squared error (MSE, scaled by $10^{-4}$) between estimated and exact interaction tensors for different feature--map sizes.}
\label{tab:diabetes_ablation_fmap}
\end{table}
For first--order effects ($k=1$), all feature--map sizes yield near--perfect
agreement with the teacher, confirming that linear attributions are easily
captured. Higher--order interactions ($k=2,3$) benefit from richer embeddings:
increasing $m_{\text{fmap}}$ from $1$ to $4$ notably improves cosine similarity,
suggesting that moderate latent width suffices to model nonlinear
pairwise and triple dependencies, while larger maps give diminishing returns.

We also train a plain tensor network whose leaves receive only the raw augmented inputs \([x_i, 1]\), i.e., without any learned feature map \(\psi\).
This baseline isolates the contribution of the feature map while keeping the FewEval masking, shared Chebyshev grid, teacher model, training procedure, and evaluation metrics identical to the main study.
On \texttt{diabetes}, the plain TN achieves strong first–order recovery (cos \(\approx 0.996\)), moderate agreement for pairwise interactions (cos \(\approx 0.71\)), and weak alignment for third–order terms (cos \(\approx 0.12\)).
These results indicate that the TN’s multilinear structure over \([x_i, 1]\) captures most linear effects and part of the quadratic structure, whereas richer higher–order dependencies benefit from a learned per–feature embedding (e.g., a small–channel \(\psi\)), consistent with our ablation trends.
\subsection{Code availability.}
We provide an implementation of our method and experimental setup at \url{https://github.com/farzana0/TN-SHAP}.

\end{document}